%% file: arxiv_main.tex
\documentclass[12pt]{article}

\usepackage[english]{babel}
\usepackage{csquotes}
\usepackage[a4paper,margin=1in]{geometry}

\usepackage[style=numeric-comp, sorting=none, backend=biber]{biblatex}
\usepackage{graphicx}

\usepackage{amsmath}
\usepackage{amsfonts}
\usepackage{amssymb}
\usepackage{amsthm}

\usepackage[caption=false,font=footnotesize]{subfig}

\usepackage{algorithm}
\usepackage{algpseudocode}
\usepackage[export]{adjustbox}

\usepackage{multirow}
\usepackage{placeins}
\usepackage{tikz}
\usetikzlibrary{arrows.meta,positioning,fit}

\usepackage[colorlinks=true, allcolors=blue]{hyperref}
\usepackage[capitalise,noabbrev]{cleveref}
\hypersetup{hypertexnames=false}

\newtheorem*{definition}{Definition}

\newtheorem*{remark}{Remark}
\theoremstyle{plain}
\newtheorem{corollary}{Corollary}
\theoremstyle{definition}
\title{Soft Tuy-Completeness for Robust Projection Selection in Cone-Beam CT}
\author{%
  Linda-Sophie Schneider and Andreas Maier \\
  \normalsize Pattern Recognition Lab, Friedrich-Alexander-Universit\"{a}t Erlangen-N\"{u}rnberg, Germany \\
  \normalsize \texttt{linda-sophie.schneider@fau.de}
}
\date{May 19, 2026}

\begin{document}

\maketitle
\thispagestyle{plain}

\begin{abstract}
This work introduces a continuous soft near-orthogonality score and a
resolution-aware saturated coverage objective for projection selection in
region-of-interest focused cone-beam CT, grounded in Tuy's completeness
theory. Replacing the binary hit-or-miss model of classical Tuy completeness
with a graded, differentiable formulation preserves a direct link to achievable
feature sizes while enabling both efficient approximate and exact optimisation.

We establish that the underlying discrete decision problems are NP-complete via
polynomial-time reductions from Set Cover, motivating a submodular greedy
algorithm with proven $(1-1/\mathrm{e})$ approximation guarantees and a
mixed-integer linear program (MILP) that provides certified optimality bounds.
The MILP serves as a quality certificate for the greedy solution rather than a
competing optimiser.

The primary empirical finding confirms this relationship: across a systematic
benchmark spanning six target regions, multiple projection budgets, and four
controlled occlusion conditions, the pooled median greedy-to-MILP objective
ratio was 0.998, with a substantial fraction of cases certified globally
optimal. A binary formulation is included as a diagnostic baseline; it
strengthens hard directional completeness but is weaker on the continuous
coverage scale.

We additionally introduce Effective Spatial Resolution (ESR), a physically
interpretable trajectory-level diagnostic that maps directional sampling gaps
to achievable feature sizes. ESR correlates reliably with matched
reconstruction quality across projection budgets and occlusion levels,
providing a practical bridge between the selection stage and the image domain
without requiring reconstruction.
\end{abstract}


\input{content/intro}

\input{content/trajectory_np_complete}

\input{content/methods}

\input{content/projection_selection_methods}

\input{content/experiments}

\input{content/results}

\input{content/ablation_study}

\input{content/discussion}

\input{content/conclusion}

\section*{Conflict of Interest Statement}
The authors declare that the research was conducted in the absence of any commercial or financial relationships that could be construed as a potential conflict of interest.

\section*{Ethics Statement}
This study did not involve human participants, animals, or identifiable personal data. No ethical approval was required for the computational research presented in this work.

\newpage

\printbibliography

\end{document}

%% file: content/intro.tex
\section{Introduction}


Cone-beam computed tomography (CBCT) has become an essential imaging modality across medical, industrial, and research applications. In medical imaging, CBCT is routinely deployed in interventional radiology suites, surgical navigation systems, and dentistry for real-time 3D guidance at a lower radiation dose and with faster acquisition than conventional multi-detector CT~\cite{Stayman2019Task}. In industrial contexts, cone-beam systems enable non-destructive evaluation and inspection of complex parts with spatial resolutions and material sensitivity often exceeding conventional CT~\cite{Herl2020Scanning,Bauer2024Scan}. However, despite these advantages, standard circular or helical scan trajectories remain suboptimal for many applications, especially when scanning large, complex, or non-convex objects, or when operating under mechanical and dose constraints \cite{Herl2021Task,Holub2019RoboCTA,Schneider2024IntegerOO}. Furthermore, standard circular trajectories fundamentally fail to satisfy Tuy’s data completeness condition for off-center regions of interest (ROIs), as the source path cannot intersect all Radon planes passing through the target volume, inevitably leading to cone-beam artifacts and reduced reconstruction fidelity~\cite{Hatamikia2022Source}.

A fundamental challenge in CBCT system design is the trajectory completeness problem: ensuring that the acquired projections contain sufficient information for accurate 3D reconstruction. The classical answer is provided by Tuy's data completeness condition~\cite{Tuy1983Inversion,Smith1985Image}, which establishes that exact reconstruction is possible if and only if every plane intersecting the object also intersects the X-ray source path. In Radon-space terms, this corresponds to complete sampling of the 3D Radon transform over a region containing the object. While theoretically elegant, Tuy's condition is formulated over a continuous parameter space where every plane through the object must intersect the source path, but its per-plane hit-or-miss character makes it effectively binary: a trajectory is either complete or it is not, with no gradient or degree of coverage to guide optimization. Maier et al.~\cite{Maier2015Discrete} and Liu et al.~\cite{Liu2012Completeness} later introduced practical discrete approximations by sampling the unit sphere of plane normals and counting covered directions, converting the continuous criterion into a quantitative but still binary-per-direction coverage score. This voxel-centric discretisation is computationally tractable, yet it still assumes idealized detector geometry and perfect visibility, and provides limited guidance for designing practical trajectories subject to real-world constraints.

In practice, modern CBCT systems must contend with numerous physical and engineering realities: finite detector size and resolution, cone-beam geometry, spatially varying X-ray attenuation through the object and surrounding materials, scattered radiation, mechanical limits of robotic arms or C-arm gantries, and clinical or industrial dose and acquisition-time budgets. These constraints make it infeasible to fully achieve Tuy completeness; instead, practitioners must make informed trade-offs between trajectory complexity, data acquisition cost, and reconstruction quality. Furthermore, in many applications, perfect reconstruction of the entire volume is unnecessary; instead, the goal is to achieve high-quality imaging of a specific ROI such as a surgical target, a critical component, or a suspected defect. Such ROI-focused imaging offers the potential for substantial reductions in scan time and radiation dose.

\input{content/related_work.tex}

\subsection{Contributions and Organization}

This work advances the theory and practice of cone-beam CT trajectory design by developing a physics-based discrete optimization framework for projection selection, grounded in completeness theory and equipped with formal complexity and algorithmic guarantees. Our key contributions are as follows:

\begin{enumerate}
  \item \textbf{Soft coverage formulation and resolution-aware completeness model.} The primary modeling contribution is a continuous \emph{soft near-orthogonality score}: a smooth, differentiable relaxation of the binary per-direction Tuy hit criterion that replaces the hard $0/1$ coverage decision with a graded measure of directional support. Building on this, we develop a resolution-driven, voxel-wise completeness formulation that integrates Nyquist-based angular sampling requirements, attenuation constraints, and detector visibility into a unified saturated coverage objective, which directly serves as the optimization target.

  \item \textbf{Physically interpretable validation metric.} We introduce the \emph{Effective Spatial Resolution (ESR)} as a task-agnostic, physics-based metric that maps angular sampling gaps directly to resolvable feature sizes within the ROI. ESR provides an interpretable validation tool that is decoupled from the optimization objective and enables principled assessment of trajectory quality independent of reconstruction algorithms.

  \item \textbf{Complexity-theoretic characterization of ROI-based trajectory optimization.} We formally define the ROI-based CT trajectory optimization problem (ROI-CTTOP) and prove that its decision variant is NP-complete via polynomial-time reduction from Set Cover. A corollary extends the decision-problem result to binary and soft directional coverage, while the corresponding optimization variants are NP-hard. This provides a rigorous complexity-theoretic foundation for completeness-driven CT trajectory design and motivates the use of both efficient greedy approximations and exact methods that provide optimality certificates.

  \item \textbf{Greedy algorithm with approximation guarantees and MILP-based certification.} Leveraging the submodular structure of the completeness objective, we present a marginal-gain greedy projection-selection algorithm with $\mathcal{O}(kmn)$ complexity and provable $(1-1/e)$ approximation guarantee as the primary practical solver. To assess how tight this guarantee is on realistic CT instances, we additionally develop an exact MILP formulation that provides certified optimality bounds via branch-and-cut. The certificates show that greedy is already near-optimal in practice, establishing the MILP primarily as a diagnostic benchmark rather than a competing optimizer.

\end{enumerate}

The paper is organized as follows. \cref{sec:roi_cttop} formalizes the ROI-CTTOP problem and establishes its NP-completeness, connecting it to classical maximum-coverage problems. \cref{sec:completeness} develops the resolution-aware completeness model and associated metrics. \cref{sec:algorithms} presents the greedy and MILP-based optimization algorithms. \cref{sec:experiments} describes the experimental setup, compared metrics, and reconstruction protocol. \cref{sec:results} reports the main selection and reconstruction results, \cref{sec:ablation} analyzes the targeted ablations, \cref{sec:discussion} discusses implications and limitations, and \cref{sec:conclusion} closes with the main takeaways and future work.

%% file: content/related_work.tex
\subsection{Related Work and Problem Positioning}

Over the past two decades, several research directions have emerged to address trajectory design in CBCT under practical constraints. We situate our work within this landscape by critically reviewing four complementary research areas and identifying key gaps that motivate our contributions.

\paragraph{Voxel-wise and local completeness metrics.}

To operationalize Tuy's condition in realistic systems, researchers have developed quantitative, voxel-level measures of data completeness. The foundational direction-based sampling approach proposed by Maier et al.~\cite{Maier2015Discrete} and Liu et al.~\cite{Liu2012Completeness} reformulates Tuy's global requirement locally: rather than requiring every Radon plane to intersect the source trajectory, they discretize the unit sphere of plane normals and check whether each sampled direction is covered by at least one source position. This voxel-centric perspective is both computationally practical and amenable to optimization.

Building upon this foundation, Herl et al.~\cite{Herl2020Scanning,Herl2022X} introduced significant extensions: they incorporate attenuation-based filtering by neglegting rays with high mean attenuation and angular-distance weighting, yielding more robust quality measures that account for ray penetration and material absorption. Their attenuation-aware completeness metrics are particularly relevant to materials with high density or high-Z components, where naive direction sampling would overestimate the utility of poorly-penetrating rays. Contemporaneous work by Clackdoyle and Noo~\cite{Clackdoyle2020Quantification} characterized incompleteness at the voxel level and validated local Tuy conditions using phantom studies, further demonstrating the practical utility of voxel-based completeness measures.

However, a critical limitation of existing completeness-based work is its reliance on greedy heuristics. Standard approaches iteratively select the source that provides the largest marginal gain in coverage, but this strategy is sensitive to initialization order, unable to refine or revisit previously selected views, and lacking optimality guarantees. While greedy selection provides a scalable baseline, it foregoes the opportunity to benchmark solution quality against globally optimal trajectories. Our work directly addresses this gap by embedding these robust completeness metrics within a rigorous combinatorial optimization framework that leverages both greedy approximation algorithms with proven $(1-1/\mathrm{e})$ guarantees and mixed-integer linear programming (MILP) to provide certified optimality bounds and confirm that the greedy approximation is already near-optimal on the studied instances.

\paragraph{Task-driven and object-aware trajectory optimization.}

A complementary paradigm to completeness-driven design is task-based image quality optimization, wherein the goal is to maximize performance on a specific diagnostic or measurement task rather than achieve universal data completeness. In medical imaging, Stayman et al.~\cite{Stayman2019Task,Capostagno2019Task} developed task-driven trajectory optimization frameworks that directly optimize model-observer performance, namely the detectability index for a lesion detection task. Related work by Herl et al.~\cite{Herl2021Task} extended these approaches to industrial settings with multiple features of interest.

In industrial CT, object-aware methods have proven particularly effective. Fischer et al.~\cite{Fischer2016Object} proposed a detectability-index-based optimization algorithm for industrial X-ray tomography that leverages CAD models to determine optimal acquisition poses for user-defined features, reducing the number of required projections by up to 75\% compared to standard circular scans. Parallel efforts by Hatamikia et al.~\cite{Hatamikia2020Optimization,Hatamikia2021Toward,Hatamikia2022Source} developed target-based CBCT frameworks incorporating kinematic constraints from C-arm gantries and surgical equipment, demonstrating clinically feasible customized trajectories with significantly reduced projection counts.

Recent deep learning approaches have further enhanced task-driven optimization. Yuan et al.~\cite{Yuan2024ApplicationOG} propose recurrent neural networks (RNNs) for sequential view selection, while Yang et al.~\cite{Yang2022Learning} introduce learning-based approaches to estimate projection importance through reconstruction networks. While these task-driven and learning-based methods are powerful and increasingly practical, they operate in continuous orbit-parameter spaces like C-arm pose angles or source-detector distances where local minima and non-convexity are endemic. Moreover, task-driven methods, by definition, optimize for a specific diagnostic goal and may be suboptimal for imaging scenarios where the clinical task is unknown or variable.

Our approach is orthogonal and complementary: we reformulate the problem as a \emph{discrete view-selection task}, in which the candidate source positions are treated as a finite set and the goal is to choose a subset. This discrete formulation enables the application of exact integer programming solvers that provide optimality guarantees and upper-bound certificates. By focusing on geometric completeness rather than a specific task, our framework establishes a physics-based ``floor'' of data sufficiency that holds across diverse diagnostic and measurement objectives.

Like Fischer et al., our attenuation-based validity model requires forward projections of the object — in practice derived from a CAD model or a reference scan. However, our framework requires only a geometric region specification (the ROI) rather than a task-specific detectability model tied to a particular feature. Moreover, the geometric completeness component of our framework can be applied independently of the attenuation model when object-specific priors are unavailable, making the approach applicable across a broader range of scenarios.

\paragraph{Sparse-view CT and reconstruction-aware view selection.}

Orthogonal to the view-selection problem, substantial research effort has been devoted to \emph{sparse-view reconstruction}, exploring how iterative algorithms~\cite{Natterer2001Mathematics} and learning-based methods can compensate for incomplete or sparse sampling by leveraging prior knowledge, regularization, or neural network priors. Di et al.~\cite{Di2023Review} provide a comprehensive review of sparse-view and limited-angle CT reconstruction using deep learning, documenting recent advances in unsupervised methods~\cite{Chang2025Unsupervised}, Gaussian representations~\cite{Wang2024Sparse}, and hybrid deep-iterative approaches~\cite{Huang2019Data}.

Recent advances in differentiable reconstruction, such as DRACO~\cite{Ye2025DRACO}, enable accurate reconstruction of arbitrary non-circular orbits and provide differentiable loss functions for learning-based trajectory optimization. More directly relevant to explicit view planning, recent work has proposed data-driven view-selection strategies. Lin et al.~\cite{Lin2025Tomographic} introduce the \emph{View Covariance Loss}, which optimizes a statistical surrogate for reconstruction error by studying the covariance structure of the measurement noise in Radon space. Gardner et al.~\cite{Gardner2019Improvements} and other researchers have demonstrated substantial improvements in CBCT image quality using novel iterative reconstruction algorithms, highlighting the potential of reconstruction-side advances.

However, a fundamental distinction separates reconstruction-aware and completeness-driven approaches: the former assume that advanced reconstruction algorithms can \emph{compensate} for geometric deficiencies (e.g., by exploiting sparsity priors or learned image models), while the latter ensures data \emph{sufficiency} from first principles. Our framework complements these reconstruction-side advances by ensuring that selected views satisfy the fundamental physics-based requirements of data completeness before reconstruction algorithms are applied. This two-stage perspective (first ensure geometric completeness, then apply task-optimized reconstruction) provides a principled baseline and is particularly valuable in clinical settings where reconstruction algorithm availability or performance may be variable.

\paragraph{Sampling theory and resolution requirements in CT.}

Beyond qualitative direction coverage, the classical theory of angular sampling in CT provides quantitative requirements for spatial resolution. Building on work by Tuy~\cite{Tuy1983Inversion}, Smith~\cite{Smith1985Image}, and Natterer~\cite{Natterer2001Mathematics}, the Nyquist-Shannon sampling theorem applied to the angular domain establishes that features of linear size $f_{\min}$ within a circular ROI of radius $r$ require angular sampling density proportional to $r/f_{\min}$. Specifically, the maximum allowable angular gap between adjacent source positions is bounded by $\theta_{\max} \approx f_{\min}/(2r)$~\cite{buzug2011einfuhrung}.

Prior work on sparse angular sampling by Dennerlein et al.~\cite{Dennerlein2005Exact}, Yaz\i{}c\i{} et al.~\cite{Yazici2012Inversion}, and Kazantsev et al.~\cite{Katsevich2013exact} has explored incomplete orbits and their impact on resolution, often using Fourier-domain analysis to characterize aliasing and blur. Classical reconstruction algorithms including Grangeat's method~\cite{Grangeat1991Mathematical,Defrise1994cone} and direct Fourier inversion~\cite{Schaller1998efficient} rely on completeness of 3D Radon data, which requires proper angular sampling. However, much of this work focuses on continuous orbits or parametric trajectories (e.g., circular, helical, or spiral) rather than discrete view selection.

Herl et al.~\cite{Herl2022X} have established a completeness condition for sets of arbitrary projections that integrates the Nyquist angular sampling rate $\Delta\gamma = f_{\min}/(2r)$ directly into a per-direction binary coverage check, which was the foundation of our work. Our contribution is twofold: we replace the binary hit-or-miss test with a continuous \emph{soft near-orthogonality score} that provides graded directional coverage and enables a saturated coverage objective, and we embed this formulation within a discrete optimization framework with formal complexity and approximation guarantees.

\paragraph{Algorithmic and optimization-theoretic foundations.}

The mathematical structure of view selection is intimately connected to the \emph{maximum coverage problem}, a classical problem in combinatorial optimization. Given a universe $U$ of elements, a collection of subsets $\{S_1, \ldots, S_m\}$, and a budget $k$, the maximum coverage problem seeks $k$ subsets whose union has maximum cardinality. Nemhauser, Wolsey, and Fisher~\cite{Nemhauser1978analysis} proved that the set function is monotone and submodular, and that greedy selection attains a $(1-1/\mathrm{e})$-approximation. Hochbaum~\cite{Hochbaum1982Approximation} further established NP-hardness of the optimization variant. More recent developments in submodular maximization by Vondrák~\cite{Vondrak2008Optimal}, Filmus and Ward~\cite{Filmus2014Monotone} have refined approximation bounds and provided practical algorithms for matroid-constrained and cardinality-constrained variants.

Despite this rich theoretical foundation, the formal complexity of completeness-driven CT trajectory optimization has remained underexplored in the imaging literature. While integer programming has seen recent application in related CT problems~\cite{Schneider2024IntegerOO}, explicit NP-completeness proofs and approximation guarantees for completeness-driven view selection are lacking. Our work bridges this gap by formally proving NP-completeness of the ROI-based trajectory problem (ROI-CTTOP) via reduction from Set Cover, thereby rigorously justifying the use of both efficient greedy approximations for real-time applications and exact branch-and-cut methods for offline planning.

Moreover, we leverage recent advances in mixed-integer linear programming (MILP) solvers: modern branch-and-cut implementations (e.g., Gurobi~\cite{gurobi}) can efficiently handle problems with hundreds of binary variables (source positions) and thousands of continuous variables (directional coverages) by exploiting problem structure, generating cutting planes, and employing effective heuristics. This enables practical optimization of realistic trajectory-selection instances on standard hardware.

\paragraph{Multi-ROI trajectory design.}

A significant gap in the literature is the treatment of \emph{multiple competing ROIs} within a single acquisition budget. In many clinical applications (e.g., surgical guidance for multiple anatomical targets) and industrial settings (e.g., inspection of multiple components or defects), a single scan must provide adequate imaging of several distinct regions. Prior work on completeness metrics and view selection typically addresses single ROIs; extending to multiple ROIs under a shared projection budget is non-trivial and has received limited systematic study.

Our MILP formulation directly addresses this gap by replicating coverage computations for each ROI and jointly optimizing a shared view set. As a structural by-product, the integer program can also incorporate feasibility constraints that exclude geometrically or mechanically infeasible source positions, though a systematic treatment of kinematic constraints is beyond the scope of this work and is left to future investigation.

\paragraph{Limited-angle tomography and artifact characterization.}

When view selection results in incomplete angular coverage, limited-angle artifacts arise that corrupt image quality. Frikel~\cite{Frikel2013Characterization} characterizes limited-angle artifacts using microlocal analysis and develops edge-preserving reconstruction methods adapted to incomplete geometry. Huang et al.~\cite{Huang2019Data} propose data-consistent artifact reduction (DCAR) by combining deep learning priors with iterative reconstruction, achieving significant improvements for limited-angle cone-beam geometry. Understanding these artifact mechanisms motivates the importance of ensuring geometric completeness through view selection, rather than relying solely on reconstruction-side mitigation.

%% file: content/trajectory_np_complete.tex

\section{ROI-Based CT Trajectory Optimization and Complexity Analysis}\label{sec:roi_cttop}

In many clinical and industrial applications of computed tomography, practitioners are primarily interested in obtaining high-fidelity reconstructions of a specific ROI rather than the entire volume. By focusing acquisition efforts on a critical subset of voxels, we can reduce scan time, radiation exposure, and data storage requirements, while still ensuring diagnostically useful images of the target structure~\cite{Stayman2019Task, Fischer2016Object, Bauer2024Scan}. In this section, we formalize the ROI-Based CT Trajectory Optimization Problem (ROI-CTTOP), discuss its combinatorial structure, and establish its computational intractability via an NP-completeness proof.

\subsection{Problem Formulation}

Let $V$ denote the finite set of all voxels in the 3D volume under consideration. We assume a discrete set of candidate X-ray projection angles, denoted by $P = \{p_1, p_2, \dots, p_m\}$. Each projection $p_i$ images a subset of voxels, which we write as $R_i \subseteq V$.  We also designate a region of interest $T \subseteq V$, typically corresponding to the anatomical or structural subvolume on which diagnostic or quantitative analysis is to be performed.

Our goal is to select at most $k$ projections from $P$ so as to maximize the number of ROI voxels that are covered at least once. The budget is therefore an upper bound on acquisition cost; if an implementation requires exactly $k$ acquired views, unused budget can be filled only after the optimization has reached zero marginal gain. Formally:

\begin{definition}[ROI-CTTOP]
Given a voxel set $V$, projections $P=\{p_1,\dots,p_m\}$ with coverage $R_i\subseteq V$, an ROI $T\subseteq V$, and a budget $k\in\mathbb{Z}_{>0}$, the ROI-CTTOP is
\[
  \max_{\substack{P'\subseteq P\\|P'|\le k}} \bigl|\bigl(\bigcup_{p_i\in P'}R_i\bigr)\,\cap\,T\bigr|.
\]
\end{definition}

In words, we choose up to $k$ angles whose union of rays intersects the largest possible number of voxels within the ROI. Note that voxels covered multiple times do not receive extra credit; it is a pure coverage metric. To analyze computational complexity, we next introduce the corresponding decision problem.

\begin{definition}[Decision ROI-CTTOP]
Given the same inputs as above, plus a coverage threshold $L\in\mathbb{Z}_{\ge0}$, the decision question is:
\[
  \text{Does there exist }P'\subseteq P,\;|P'|\le k,\quad\text{s.t.}\quad
  \bigl|\bigl(\cup_{p_i\in P'}R_i\bigr) \cap T\bigr| \;\ge\; L?
\]
\end{definition}

This binary formulation allows us to employ standard tools from computational complexity theory. In particular, we will show that deciding whether one can achieve a specified level of ROI coverage with a budget of $k$ views is an NP-complete problem.

\subsection{NP-Completeness of Decision ROI-CTTOP}\label{sec:complexity}

We now prove that the decision variant of ROI-CTTOP is NP-complete. The proof proceeds via the canonical two-step procedure: establishing membership in NP and demonstrating NP-hardness through a polynomial-time reduction from the Set Cover problem.

\paragraph{Membership in NP.}
To establish that Decision ROI-CTTOP lies in NP, we demonstrate that a valid solution can be verified in polynomial time. A certificate for this problem is a subset $P' \subseteq P$ consisting of at most $k$ projections. Given such a certificate, a deterministic verification algorithm first confirms the cardinality constraint $|P'| \le k$. It then initializes a boolean coverage map for the voxel set $V$ and iterates through each selected projection $p_i \in P'$, marking all voxels $v \in R_i$ as covered. Finally, the algorithm computes the intersection of the covered set with the region of interest $T$ and checks if the cardinality meets the threshold $L$. Since the summation of coverage sets and the counting of ROI voxels require at most $\mathcal{O}(m|V|)$ operations, the verification runs in time polynomial in the input size.

\paragraph{NP-Hardness via Set Cover.}
We establish hardness by reducing from the classic Set Cover problem, which is known to be NP-complete~\cite{Garey1979ComputersAI}. The problem is defined as follows:

\begin{quote}
  \textbf{Instance:} A finite universe $U$ of size $n$, a collection of subsets $\mathcal{S}=\{S_1,\dots,S_m\}$ where $S_i\subseteq U$, and a selection budget $k \in \mathbb{Z}_{>0}$.  \\[0.3em]
  \textbf{Decision:} Does there exist a subcollection $\mathcal{S}'\subseteq\mathcal{S}$ with $|\mathcal{S}'| \le k$ such that $\bigcup_{S \in \mathcal{S}'} S = U$?
\end{quote}

We construct a reduction that maps any instance of Set Cover to an instance of Decision ROI-CTTOP in polynomial time. Let $V := U$ and $T := U$, so that the ROI comprises the entire universe of elements. We create projections $P=\{p_1,\dots,p_m\}$ by setting $R_i := S_i$ for each $i$, so every projection corresponds exactly to one set in the cover instance. We retain the same budget $k$ and set the coverage threshold $L := |U| = n$. This transformation runs in time polynomial in $n$ and $m$.

With the construction complete, we now demonstrate that a solution exists for the Set Cover instance if and only if a valid solution exists for the constructed ROI-CTTOP instance. We verify this logical equivalence in two steps:

\begin{itemize}
  \item[\textbf{$\Rightarrow$}] First, assume the original Set Cover instance has a solution. If there exists a subcollection $\mathcal{S}'$ of size at most $k$ covering all of $U$, then selecting the corresponding projections $P'=\{p_i: S_i\in\mathcal{S}'\}$ yields a total coverage of
  \begin{align*}
    \bigl|\bigl(\cup_{p_i\in P'}R_i\bigr)\cap T\bigr| &= \bigl|\cup_{S_i\in\mathcal{S}'}S_i\bigr| \\
    &= |U| = L,
  \end{align*}
  where the second equality uses the assumption that $\mathcal{S}'$ covers all of $U$, and the third follows from $L := |U|$ by construction. Since the coverage threshold $L$ is met, the Decision ROI-CTTOP instance answers \emph{yes}.

  \item[\textbf{$\Leftarrow$}] Conversely, assume the constructed ROI-CTTOP instance has a solution. Let $P'$ be any subset of at most $k$ projections that covers at least $L=|U|$ ROI voxels. Since the total size of the ROI is $|U|$, this implies that the projections must cover every element in $U$. Consequently, the selected projections correspond to a subcollection $\mathcal{S}'\subseteq\mathcal{S}$ of size \emph{at most} $k$ that covers all of $U$, forming a valid set cover for the original instance.
\end{itemize}

Thus we have shown Set Cover $\le_p$ Decision ROI-CTTOP, establishing NP-hardness. Along with membership in NP, this proves Decision ROI-CTTOP is NP-complete. Finally, since solving the optimization variant of ROI-CTTOP would allow us to decide the decision problem, the optimization version is NP-hard.

The theorem above is stated for the voxel-coverage formulation of ROI-CTTOP. The optimization method developed in \cref{sec:completeness,sec:algorithms} operates instead on a directional coverage objective, formulating completeness in terms of sampled plane-normal directions rather than individual voxels. The following corollary shows that the same hardness conclusion holds for both the binary and soft directional coverage decision problems.

\begin{corollary}[NP-completeness of directional coverage]\label{cor:directional_npc}
Both the binary directional coverage decision problem and the soft directional coverage decision problem are NP-complete.
\end{corollary}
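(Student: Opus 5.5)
We prove both claims with the same two-step template used for Decision ROI-CTTOP above: membership in NP by certificate verification, then NP-hardness by a polynomial-time reduction from Set Cover. The directional problems are not yet defined formally at this point (they are set up in \cref{sec:completeness}). We therefore fix the abstract form we rely on. There is a finite set $D$ of sampled plane-normal directions, possibly replicated over ROI voxels. Each candidate projection $p_i$ induces either a binary hit indicator $h_{i,d}\in\{0,1\}$, or a soft score $s_{i,d}\in[0,1]$ for each direction $d$. The binary objective counts directions hit by at least one selected view. The soft objective is the saturated sum $\sum_{d}\min\{1,\sum_{p_i\in P'} s_{i,d}\}$, or an equivalent capped aggregate. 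The decision question asks whether some $P'$ with $|P'|\le k$ attains objective at least $L$.

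\textbf{Membership in NP.} The certificate is again $P'$ with $|P'|\le k$. Given $P'$, the verifier evaluates the objective by a single pass over $D\times P'$. This costs $\mathcal{O}(m|D|)$ operations for both the binary hit count and the saturated sum. The scores $s_{i,d}$ are part of the input as rationals of polynomial bit length, so the arithmetic and the comparison with $L$ are polynomial. This needs no further idea.

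\textbf{Hardness, binary case.} We reuse the reduction from Set Cover. Identify the universe $U$ with the direction set $D$, and set $h_{i,d}=1$ iff $d\in S_i$. This is the earlier ROI-CTTOP construction with voxels relabelled as directions, so the earlier equivalence proof transfers verbatim with $L=|U|$. The only point needing care is whether the binary directional problem is defined with arbitrary input incidences, or whether incidences must arise from actual CT geometry. We read the problem, like ROI-CTTOP, as taking the coverage data as input. If geometric realisability were required, we would instead argue that any set system can be embedded by choosing source positions and sampled directions appropriately. We expect this embedding step to be the main obstacle, and we would avoid it by stating the problem over the abstract incidence data.

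\textbf{Hardness, soft case.} We reduce from the binary case, or directly from Set Cover, by restricting to instances with $s_{i,d}\in\{0,1\}$, namely $s_{i,d}=h_{i,d}$. On such instances each term $\min\{1,\sum_{p_i\in P'} s_{i,d}\}$ equals $1$ exactly when $d$ is covered by some selected view, and $0$ otherwise. The soft objective therefore coincides with the binary count, and the threshold $L=|U|$ again encodes Set Cover. Since $\{0,1\}\subseteq[0,1]$, these are valid soft instances. If the soft score is required to come from a specific smooth kernel that never outputs exactly $0$ or $1$, we instead pick a kernel sharpness that separates the scores. Hits then receive scores at least $1-\varepsilon$ and misses at most $\varepsilon$, with $\varepsilon<1/(2k|U|)$. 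We choose the threshold $L=|U|(1-\varepsilon)$ and check that any uncovered direction drops the total below it. This rounding argument is the secondary technical point. Combining membership with hardness gives NP-completeness of both problems, and NP-hardness of the optimization variants follows as before.
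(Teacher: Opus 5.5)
Your proposal is correct and follows essentially the same route as the paper. Membership is shown by verifying a certificate $P'$. Binary hardness uses the Set Cover reduction with $B_{ij}=\mathbb{I}[j\in S_i]$ and $L=z$. Soft hardness is obtained by restricting to $0/1$ matrices, where $\min(1,\sum_i B_{ij}x_i)$ is exactly the coverage indicator.

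The paper states both problems over arbitrary input matrices, so your fallback arguments (the geometric embedding and the $\varepsilon$-separated kernel) are not needed. The latter does check out, since $\varepsilon<1/(2k|U|)\le 1/(k+|U|)$.
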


\begin{proof}
\textit{Binary directional coverage.} Given a binary coverage matrix $B \in \{0,1\}^{m \times z}$, a budget $k$, and a threshold $L$, the decision problem asks whether there exists a subset of at most $k$ projections covering at least $L$ of the $z$ sampled plane-normal directions. Membership in NP is immediate: a certificate $P' \subseteq P$ with $|P'| \le k$ is verifiable in $\mathcal{O}(mz)$ time by marking covered directions and counting them.

For NP-hardness we reduce from Set Cover. Given an instance $(U,\mathcal{S},k)$ with $\mathcal{S} = \{S_1,\dots,S_m\}$, set $z := |U|$, define $B_{ij} := \mathbb{I}[j \in S_i]$, keep the budget $k$, and set $L := z$. The construction runs in $\mathcal{O}(mz)$ time. A subcollection $\mathcal{S}' \subseteq \mathcal{S}$ of size at most $k$ covers all of $U$ if and only if the corresponding at most $k$ projections cover all $z$ directions, so the directional coverage instance answers \emph{yes} if and only if the Set Cover instance does. Hence binary directional coverage is NP-complete.

\textit{Soft directional coverage.} Given a matrix $A \in [0,1]^{m \times z}$ with rational entries of polynomially bounded encoding length, a budget $k$, and a threshold $\theta$, the decision problem asks whether there exists $x \in \{0,1\}^m$ with $\sum_i x_i \le k$ such that
\[
  \sum_{j=1}^{z} \min\!\left(1,\, \sum_{i=1}^{m} A_{ij} x_i\right) \;\ge\; \theta.
\]
Membership in NP holds because the objective can be evaluated and compared to $\theta$ in time polynomial in $m$ and $z$.

Binary directional coverage is the restriction to $A_{ij} = B_{ij} \in \{0,1\}$ with $\theta := L$. Under binary inputs, $\min(1, \sum_i B_{ij} x_i) = 1$ if and only if at least one selected projection covers direction $j$, so the objective equals exactly the number of covered directions. A polynomial-time algorithm for soft directional coverage would therefore decide binary directional coverage, which is NP-hard by the reduction above. Hence soft directional coverage is NP-complete.
\end{proof}

\subsection{Relation to Maximum Coverage and Submodularity}

The ROI-CTTOP objective can be viewed as a special case of the classic maximum coverage problem, where each projection $p_i$ corresponds to a set $R_i\cap T$ and the goal is to select up to $k$ sets whose union has maximum cardinality~\cite{Nemhauser1978analysis}. The corresponding set function
\[
  f(P') \;=\; \bigl|\bigl(\cup_{p_i\in P'}R_i\bigr)\cap T\bigr|
\]

is monotone and submodular, which implies that a simple greedy selection algorithm attains a $(1-1/\mathrm{e})$-approximation in the worst case~\cite{Nemhauser1978analysis,Hochbaum1982Approximation}.  This places ROI-CTTOP squarely within the broader family of submodular maximization problems under a cardinality constraint, and motivates the use of greedy projection selection as a scalable baseline.

At the same time, submodularity also highlights the potential benefits of combinatorial optimization beyond purely heuristic approaches. For moderate problem sizes, MILP solvers can exploit the combinatorial structure of $f$ to deliver certified optimality bounds and, when a proof of global optimality is found, exact solutions. In this work, we leverage this connection by combining a completeness-driven set function with both greedy selection and a MILP-based formulation, enabling us to compare fast approximate solutions against globally optimal trajectories and to quantify the associated coverage gaps in realistic CT scenarios.

%% file: content/methods.tex

\section{Quantitative Data Completeness Modeling}\label{sec:completeness}
The design of trajectories in cone-beam CT exists within the balance of completeness theory and practical constraints, such as finite detector support, limited resolution, strong attenuation, and hard mechanical limits. Although Tuy's theorem provides a continuous geometric condition on the source path, it only acts as a feasibility criterion and does not quantify how much information a constrained trajectory provides for a given ROI. Therefore, we reformulate completeness locally as voxel-wise directional coverage following \cite{Maier2015Discrete}, making the sampling density explicit via Nyquist-based resolution requirements. We introduce a binary coverage matrix as the formal discrete counterpart of Tuy's condition and a continuous soft relaxation thereof as the primary optimization objective. A separate resolution-based diagnostic metric is defined to validate trajectory quality independently of the optimization model.

\subsection{From Tuy Completeness to Discrete Directional Coverage}\label{sec:tuy_to_discrete}

Tuy's theorem~\cite{Tuy1983Inversion} establishes necessary and sufficient conditions for exact cone-beam reconstruction: a compactly supported object $f(\mathbf{x})$ can be exactly reconstructed if and only if every plane intersecting its support also intersects the source trajectory $\mathcal{S}$. In Radon-space terms, this requires complete sampling of the 3D Radon transform over a region containing the support. Stated voxel-wise, for every point $v \in \Omega$ and every plane-normal direction $\mathbf{u} \in \mathbb{S}^2$, there must exist a source $s \in \mathcal{S}$ whose ray to $v$ is orthogonal to $\mathbf{u}$.
\begin{equation}
  \forall\, v \in \Omega,\;\forall\, \mathbf{u} \in \mathbb{S}^2:\quad
  \exists\, s \in \mathcal{S}:\quad
  \mathbf{u}^\top \frac{s - v}{\|s - v\|} = 0.
  \label{eq:tuy}
\end{equation}
This condition is binary: a trajectory either satisfies it globally or it does not, providing no quantitative measure of how much information a constrained trajectory supplies for a given ROI. It further assumes an infinite, noiseless detector and does not account for attenuation, scattered radiation, or finite resolution limits.

\paragraph{Valid source set.}
Not every geometrically available source position provides useful information for a given voxel $v$. We distinguish between sources that are merely geometrically capable of imaging $v$ and those that contribute diagnostically useful information by defining the set of valid source positions $\mathcal{S}_{\text{valid}}(v) \subseteq \mathcal{S}$. A source $s$ is considered valid only if the projection of the ROI containing $v$ lies entirely within the active detector area $\text{Det}$ and if the corresponding projection maintains sufficient signal integrity.
\begin{equation}
  \mathcal{S}_{\text{valid}}(v) =
  \left\{
    s \in \mathcal{S} \;\middle|\;
    \text{proj}_{s}(\text{ROI}) \subseteq \text{Det}
    \;\land\;
    \rho_s(\text{ROI};\alpha) < \eta
  \right\},
  \label{eq:svalid}
\end{equation}
where $\rho_s(\text{ROI};\alpha)$ denotes the fraction of detector pixels inside the projected ROI whose absorption exceeds an attenuation threshold $\alpha$, and $0 < \eta \leq 1$ is the maximum admissible fraction of unusable pixels. Both $\alpha$ and $\eta$ are protocol parameters set during preprocessing.

\paragraph{Discrete voxel-wise approximation.}
Maier et al.~\cite{Maier2015Discrete} and Liu et al.~\cite{Liu2012Completeness} made \cref{eq:tuy} tractable by replacing the continuous requirement over all $\mathbf{u} \in \mathbb{S}^2$ with a check over a finite sampled set $S_U = \{\mu_j\}_{j=1}^z \subset \mathbb{S}^2$. A direction $\mu_j$ is considered \emph{covered} at voxel $v$ if at least one valid source achieves near-orthogonality within angular tolerance $\Delta\gamma$.
\begin{equation}
  \exists\, s \in \mathcal{S}_{\text{valid}}(v):\quad
  \left|\mu_j^\top \frac{s - v}{\|s - v\|}\right| \leq \sin(\Delta\gamma).
  \label{eq:binary_check}
\end{equation}
Herl et al.~\cite{Herl2022X} formalize this as a completeness condition for sets of arbitrary projections and additionally restrict $\mathcal{S}_{\text{valid}}(v)$ to sources with sufficient signal integrity by excluding rays with high mean attenuation~\cite{Herl2020Scanning, Herl2021Task}.

\paragraph{Binary coverage matrix.}
Collecting the per-direction binary checks into a matrix yields the binary coverage matrix $B \in \{0,1\}^{m \times z}$~\cite{Schneider2024IntegerOO}, where $m = |\mathcal{S}|$ is the number of candidate sources and $z$ is the number of sampled directions:
\begin{equation}
  B_{ij} = \mathbb{I}\!\left[\left|\mu_j^\top \frac{s_i - v}{\|s_i - v\|}\right| \leq \sin(\Delta\gamma)\right]
  \cdot \mathbb{I}\!\left[s_i \in \mathcal{S}_{\text{valid}}(v)\right].
  \label{eq:binary_matrix}
\end{equation}
An entry $B_{ij} = 1$ indicates that source $s_i$ covers direction $\mu_j$ within the angular tolerance, subject to validity. This matrix is the discrete counterpart of the classical binary Tuy condition and is used directly by the binary greedy baseline and the binary MILP formulation described in \cref{sec:algorithms}.

\paragraph{Resolution-driven angular tolerance.}
The threshold $\Delta\gamma$ in \cref{eq:binary_check,eq:binary_matrix} is fixed by Nyquist-based resolution requirements~\cite{buzug2011einfuhrung, Shannon1949Communication}. To resolve features of minimum size $f_{\min}$ within an ROI of radius $r$, Fourier-space analysis yields the bound
\begin{equation}
  \Delta\gamma = \frac{f_{\min}}{2r}.
  \label{eq:nyquist_gap}
\end{equation}
This bound simultaneously fixes the angular tolerance in the coverage check and the required sampling density of $S_U$: directions $\mu_j$ must be spaced no coarser than $\Delta\gamma$ so that no direction goes unchecked.

\paragraph{Resolution-driven sphere sampling.}\label{density}
We seek a quasi-uniform discretization of the plane-normal directions on the unit sphere, $S_U = \{\mu_j\}_{j=1}^z \subset \mathbb{S}^2$, with density driven by \cref{eq:nyquist_gap}. To determine the required number of directions $z$, we model the sphere coverage problem using spherical caps. To cover the sphere with caps of half-angle $\theta = \Delta\gamma = f_{\min}/(2r)$, the solid angle of a single cap is
\begin{equation}
  A_\theta = 2\pi(1 - \cos\theta).
\end{equation}
The number of caps required to cover the full sphere ($4\pi$ steradians) is approximated by
\begin{equation}
  z \;\approx\; \frac{4\pi}{A_\theta} \approx \frac{2}{1-\cos\theta}.
\end{equation}
For high-resolution scenarios where $\theta \ll 1$, the Taylor approximation $\cos\theta \approx 1 - \theta^2/2$ gives
\begin{equation}
  z \approx \frac{4}{\theta^2} = \frac{16r^2}{f_{\min}^2}.\label{resolution}
\end{equation}
In practice, we set $z = \lceil 16r^2 / f_{\min}^2 \rceil$ and sample the sphere using a Fibonacci lattice~\cite{Gonzalez2009Measurement}, which yields a numerically stable, quasi-uniform distribution free of the pole singularities of latitude-longitude grids. The spherical-cap argument is an order-of-magnitude estimate; exact coverings require overlap, and the statement that coverage within tolerance $\Delta\gamma$ implies resolvability of $f_{\min}$ is a Nyquist-motivated design guideline rather than a strict guarantee under all imaging conditions.

\subsection{Soft Near-Orthogonality Score}\label{sec:soft_score}

The binary matrix $B$ is the formal starting point, but its hard $0/1$ entries create a flat optimization landscape: all trajectories that achieve the same number of covered directions look identical to any solver. To enable robust discrete optimization, we replace each binary entry with a continuous score that measures the \emph{degree} of near-orthogonality, yielding the soft coverage matrix $A \in [0,1]^{m \times z}$ as a smooth relaxation of $B$.

For a voxel $v$, let $d_i = (s_i - v)/\|s_i - v\|$ denote the normalized ray direction from $v$ to source $s_i$, and let $\tau = \sin(\Delta\gamma) = \sin(f_{\min}/(2r))$ be the dot-product tolerance corresponding to the angular bound. The soft score is defined as
\begin{equation}
  A_{ij} = \max\!\left(0,\frac{\tau - \lvert \mu_j^\top d_i\rvert}{\tau}\right) \cdot \mathbb{I}[s_i \in \mathcal{S}_{\text{valid}}(v)].
  \label{eq:cosine_soft_score}
\end{equation}
The absolute value $\lvert \mu_j^\top d_i\rvert$ accounts for the sign ambiguity of plane normals since $\mu$ and $-\mu$ describe the same Radon plane. The indicator $\mathbb{I}[s_i \in \mathcal{S}_{\text{valid}}(v)]$ enforces the same validity constraints as in $B$: any source failing the detector intersection or attenuation test receives a score of zero for all directions, regardless of geometric alignment.

The score decays linearly from $A_{ij} = 1$ at perfect near-orthogonality ($|\mu_j^\top d_i| = 0$) to $A_{ij} = 0$ at the binary threshold ($|\mu_j^\top d_i| = \tau$), and is zero beyond. The support of $A$ is contained in the support of $B$: $A_{ij} > 0$ implies $B_{ij} = 1$. The converse holds in the strict interior, $|\mu_j^\top d_i| < \tau$; at the boundary $|\mu_j^\top d_i| = \tau$ exactly, $B_{ij} = 1$ but $A_{ij} = 0$. Within the support, $A_{ij}$ provides a graded measure of how centrally the source covers the direction, rewarding sources that lie close to perfect orthogonality over those that barely pass the binary threshold. By operating on dot products rather than angles, the formulation avoids computationally expensive inverse trigonometric operations during large-scale matrix construction.

When scores from multiple selected sources are summed for a given direction $\mu_j$, partial contributions accumulate and can jointly push a direction to full coverage even if no single view achieves $A_{ij} = 1$. Capping the sum at unity prevents over-counting fully covered directions, yielding a saturated directional-coverage objective that rewards both high individual view quality and complementarity across the selected set, a property that binary coverage cannot capture since it is insensitive to the margin by which a direction is hit.

\subsection{Effective Spatial Resolution as Validation Metric}\label{sec:validation_metric}

The soft score and binary matrix serve as optimization objectives; neither directly reports what trajectory quality means in physical terms. The \emph{Effective Spatial Resolution} (ESR) fills this role as a physics-based validation metric: it maps directional sampling gaps in an optimized trajectory to an interpretable spatial resolution limit, independent of which selection method was used. ESR is not an optimization objective; it is evaluated after selection and provides a principled bridge between the geometric selection stage and the image domain without requiring reconstruction.

The ESR is built on the same valid source set $\mathcal{S}_{\text{valid}}(v)$ defined in \cref{eq:svalid} and the same direction grid $S_U$ derived in \cref{resolution}. Let $\mathcal{I}\subseteq\mathcal{S}$ denote the selected source positions. To avoid an undefined minimum when no selected source is valid for a voxel, we first define the per-direction angular gap as
\begin{equation}
  \delta_j(v;\mathcal{I}) =
  \begin{cases}
    \displaystyle \min_{s \in \mathcal{S}_{\text{valid}}(v) \cap \mathcal{I}}
    \arcsin \left( \left| \frac{s-v}{\|s-v\|} \cdot \mu_j \right| \right),
    & \mathcal{S}_{\text{valid}}(v) \cap \mathcal{I} \neq \emptyset, \\[1.2em]
    \displaystyle \frac{\pi}{2},
    & \mathcal{S}_{\text{valid}}(v) \cap \mathcal{I} = \emptyset .
  \end{cases}
  \label{eq:directional_gap}
\end{equation}
The second branch assigns the maximum possible unoriented angular miss and flags the voxel as completely unsupported by the selected valid views. In the experiments below, ROIs for which no candidate source is valid under the chosen detector and attenuation thresholds are excluded before optimization, since no trajectory can provide usable information for such an ROI. For each direction $\mu_j \in S_U$, the mean angular gap is then
\begin{equation}
  \bar\varphi(v) = \frac{1}{z} \sum_{j=1}^z \delta_j(v;\mathcal{I}).
\end{equation}
We employ the arithmetic mean $\bar\varphi(v)$ rather than the maximum gap to provide a robust statistical summary that is less sensitive to isolated outliers in the sampling pattern.

To interpret this angular error physically, we invert the Nyquist bound from \cref{eq:nyquist_gap}~\cite{Tuy1983Inversion, Smith1985Image}. An average angular gap of $\bar\varphi(v)$ corresponds to an estimated smallest resolvable feature size of
\begin{equation}
  \hat{f}_{\text{res}}(v) \approx 2r \cdot \bar\varphi(v).
  \label{eq:effective_resolution}
\end{equation}
A result of $\hat{f}_{\text{res}} = 1.0\,\text{mm}$ implies that features smaller than 1.0\,mm are likely to suffer aliasing artifacts due to insufficient angular sampling frequency, while the trajectory remains adequate for coarser structures.

While the mean ESR is a useful average-case summary, it can underestimate streaking risk when the trajectory contains rare but large directional gaps. To capture more conservative, worst-case behavior, we additionally define a directional quantile error
\begin{equation}
  \varphi_{p}(v) = Q_{p}\!\left(\left\{\delta_j(v;\mathcal{I})\right\}_{j=1}^{z}\right),
\end{equation}
where $Q_p(\cdot)$ denotes the $p$-quantile (e.g., $p=0.95$). This yields a quantile-based estimate $\hat{f}_{\text{res},p}(v) \approx 2r \cdot \varphi_p(v)$, providing a practical near-worst-case bound while remaining robust to isolated outliers. In our evaluation, we report both the mean ESR and a quantile ESR to distinguish average performance from directional-coverage tail risk.

To summarize the ROI as a single conservative number, we further aggregate the voxel-wise estimates using a voxel quantile (e.g., $Q_{0.90}$ or $Q_{0.95}$ over $v\in\Omega$) in addition to the voxel mean. This separates overall performance from localized ``weak spots'' within the ROI.

%% file: content/projection_selection_methods.tex

\section{Projection Selection for Resolution-Aware Radon Sampling}\label{sec:algorithms}

With the binary coverage matrix $B$ and its soft relaxation $A$ established in \cref{sec:completeness}, trajectory design reduces to a combinatorial selection problem: from a finite set of candidate source positions, choose at most $k$ views that maximize the saturated coverage objective. \cref{sec:matrix_construction} describes how $A$ is assembled for a given geometry by combining the resolution-driven near-orthogonality score with detector-visibility and attenuation-validity checks, decoupling the expensive physics evaluation from the subsequent combinatorial search. \cref{sec:greedy_algo} presents a marginal-gain greedy algorithm with a provable $(1-1/e)$ approximation guarantee. \cref{sec:milp} formulates an exact MILP that certifies optimality bounds via branch-and-cut.

\subsection{Construction of the Coverage Matrices}\label{sec:matrix_construction}

Both the binary coverage matrix $B \in \{0,1\}^{m \times z}$ and its soft relaxation $A \in [0,1]^{m \times z}$, formally defined in \cref{sec:completeness}, are computed from the same candidate geometry. Let $\mathcal{S} = \{s_1, \dots, s_m\} \subset \mathbb{R}^3$ denote the discrete set of candidate X-ray source positions, generated by uniform spherical sampling or constrained to a specific kinematic path. Let $S_U = \{\mu_1, \dots, \mu_z\} \subset \mathbb{S}^2$ be the direction grid whose cardinality $z$ is fixed by the resolution-driven density from Section~\ref{density}. Both matrices have dimensions $m \times z$; each entry $(i,j)$ characterises the utility of source $s_i$ for sampling Radon plane normal $\mu_j$ at the ROI center $c$. The voxel-wise formulation in \cref{sec:completeness} motivates the model; the implementation and all experiments instantiate it at the ROI center, with multi-ROI extensions handled by separate center or cluster-center matrices.

\paragraph{Shared validity preprocessing.}
The first two construction steps are identical for both matrices and operationally realise the valid source set $\mathcal{S}_\text{valid}(v)$ from \cref{eq:svalid}.

First, a geometric validity check is performed. For each candidate $s_i$, the principal ray direction $d_i = (s_i - c)/\|s_i - c\|$ is computed and the projection of the ROI centred at $c$ onto the detector is verified. Sources whose projected ROI falls outside the active detector area are geometrically invalid and receive zero entries for all directions in both matrices.

Second, attenuation filtering enforces signal quality. For each geometrically valid source, the projected ROI patch is extracted in detector space and converted to an absorption map. A source remains admissible only if fewer than a fraction $\eta$ of its ROI pixels exceed the attenuation threshold $\alpha$; otherwise it is excluded from $\mathcal{S}_\text{valid}(v)$ and its entire row is set to zero. This fraction-based criterion is less brittle than a single-pixel worst-case test and better reflects whether the projected ROI retains sufficient usable signal.

\paragraph{Matrix population.}
Given the validity-filtered candidates, the two matrix variants are populated in the third step. The binary matrix $B$ applies the hard threshold from \cref{eq:binary_matrix}: $B_{ij} = 1$ if $|\mu_j^\top d_i| \le \sin(\Delta\gamma)$, and $0$ otherwise. The soft matrix $A$ replaces this indicator with the linear decay from \cref{eq:cosine_soft_score}: $A_{ij} = \max(0,\, (\tau - |\mu_j^\top d_i|)/\tau)$, where $\tau = \sin(\Delta\gamma)$. Both scores are computed in the dot-product domain to avoid inverse trigonometric operations at scale. The support of $A$ is contained in the support of $B$: $A_{ij} > 0$ implies $B_{ij}=1$. Conversely, if $B_{ij}=1$ and $|\mu_j^\top d_i|<\tau$, then $A_{ij}>0$; at the boundary $|\mu_j^\top d_i|=\tau$, the binary entry remains one while the soft score is zero. Thus $A$ refines the interior of the binary acceptance band by encoding the degree of near-orthogonality.

Both matrices are typically sparse, as each source covers only a small spherical cap of directions. This sparsity is exploited in the subsequent optimization algorithms to maintain computational tractability even for high-resolution sampling schemes involving thousands of directions.

\subsection{Greedy Projection Selection with Continuous Coverage}\label{sec:greedy_algo}

The greedy algorithm is the primary practical solver for the view-selection problem. It iteratively constructs a trajectory of at most $k$ views by selecting at each step the candidate source that maximises the marginal gain in saturated coverage.

The algorithm maintains a state vector $\boldsymbol{\gamma} \in [0,1]^z$, initialised to zero, whose $j$-th entry records the accumulated coverage of direction $\mu_j$ by the views selected so far. At each iteration the marginal gain of candidate $s_i$ is
\[
  \Delta_i = \mathbf{1}^\top \bigl(\min(\mathbf{1},\, \boldsymbol{\gamma} + A_{i,:}) - \boldsymbol{\gamma}\bigr),
\]
where $A_{i,:} \in [0,1]^z$ is the $i$-th row of the soft coverage matrix. The clamp $\min(\mathbf{1}, \cdot)$ implements directional saturation: once a direction reaches full coverage, any further contribution from $s_i$ to that direction adds nothing to $\Delta_i$. The candidate $i^* = \arg\max_i \Delta_i$ is added to the solution and the state is updated to $\boldsymbol{\gamma} \leftarrow \min(\mathbf{1},\, \boldsymbol{\gamma} + A_{i^*,:})$. Selection is therefore driven by complementarity rather than standalone coverage: a source that covers directions already well-served by earlier selections receives low marginal gain, while a source with moderate standalone score is preferred if it covers poorly-supported directions. This process repeats until $k$ views are selected or no remaining candidate yields positive marginal gain. In the reported experiments the selected budgets remain in the positive-gain regime, so the practical trajectories use the requested number of views.

We additionally evaluate a \emph{binary Greedy} variant that applies the same marginal-gain rule to the binary coverage matrix $B$ instead of $A$. This isolates the effect of the scoring function: differences between soft Greedy and binary Greedy are attributable to the soft relaxation, while differences between binary Greedy and binary MILP reflect the benefit of global combinatorial optimisation. The greedy solution also serves as an initial feasible incumbent for the corresponding MILP, providing a lower bound from the start of the search.

Each of the $k$ selection steps evaluates $\Delta_i$ for at most $m$ candidates, with each evaluation requiring $\mathcal{O}(z)$ operations, giving a total complexity of $\mathcal{O}(kmz)$.

\paragraph{Approximation guarantee.}
The saturated coverage objective $f_{\mathrm{sat}}$ is monotone, since adding a view to any partial solution cannot decrease coverage, and submodular, since the marginal gain of any source is non-increasing as the solution grows. Both properties follow directly from the per-direction saturation structure: each term $\min(1, \sum_{i \in \mathcal{I}} A_{ij})$ is a non-decreasing concave function of the selected set $\mathcal{I}$, and non-negative sums of such functions preserve both properties. Nemhauser et al.~\cite{Nemhauser1978analysis} prove that marginal-gain greedy applied to any monotone submodular function under a cardinality constraint achieves
\[
  f_{\mathrm{sat}}(\mathcal{I}_{\mathrm{greedy}}) \;\ge\; \Bigl(1 - \tfrac{1}{e}\Bigr) \cdot \mathrm{OPT} \;\approx\; 0.632 \cdot \mathrm{OPT},
\]
where $\mathrm{OPT} = \max_{|\mathcal{I}| \le k} f_{\mathrm{sat}}(\mathcal{I})$. This bound is tight in the worst case over all monotone submodular functions; the MILP certificates in \cref{sec:results} show that the true gap is substantially smaller on the CT instances studied here.

\subsection{Mixed-Integer Linear Programming for Global Optimality}\label{sec:milp}

While the greedy heuristic is efficient, it does not guarantee finding the globally optimal set of views. For applications where maximum image quality is critical, or for offline protocol design, we formulate the view selection problem as a MILP. This formulation allows us to leverage exact solvers to explore the combinatorial solution space rigorously.

The role of the MILP in this work is twofold. First, and primarily in this study, it provides a reference model with solver certificates that quantify how close the greedy approximation is to the global optimum. Close agreement between Greedy and soft MILP should therefore be interpreted as evidence about the structure of the proposed saturated coverage objective rather than as evidence that the MILP is unnecessary. Second, it can be used directly as an exact offline planner when certified optimization is required for a given application.

The central modelling challenge is that the saturated per-direction value $\min(1, \sum_i A_{ij} x_i)$ is non-linear and cannot appear directly as a linear objective. We linearise it by introducing, for each direction $\mu_j$, an auxiliary continuous variable $y_j \in [0,1]$ that represents the coverage level the solver is allowed to claim for that direction. A binary variable $x_i \in \{0,1\}$ encodes whether source $s_i$ is selected.

\[
  \begin{aligned}
    \max_{x,y} \quad & \sum_{j=1}^z y_j \\
    \text{s.t.} \quad
    & y_j \le \sum_{i=1}^m A_{ij} \, x_i, \quad &&\forall j \in \{1,\dots,z\} \\
    & \sum_{i=1}^m x_i \le k \\
    & x_i \in \{0,1\}, \quad y_j \in [0,1]
  \end{aligned}
  \tag{MILP-Basic}
  \label{eq:milp_basic}
\]

The inequality $y_j \le \sum_i A_{ij} x_i$ prevents the model from claiming more coverage for direction $\mu_j$ than the selected views actually provide. Because the objective maximises $\sum_j y_j$, the solver pushes each $y_j$ to its largest feasible value. The domain constraint $y_j \le 1$ then acts as the saturation cap: when the selected views provide a combined score above one, $y_j$ is set to one; when they fall short, $y_j$ is limited by their total contribution. At optimality the achieved coverage for each direction is therefore
\[
  y_j^* = \min\!\left(1,\, \sum_{i \in \mathcal{I}} A_{ij}\right),
\]
and the objective value equals $f_{\mathrm{sat}}(\mathcal{I}) = \mathbf{1}^\top \min\!\left(\mathbf{1}, \sum_{i \in \mathcal{I}} A_{i,:}\right)$, the saturated coverage defined in \cref{sec:soft_score}.

For reporting, we normalise by the total number of directions $z$ to obtain the \emph{normalized saturated coverage}
\[
  \bar{f}_{\mathrm{sat}}(\mathcal{I}) = \frac{1}{z} \sum_{j=1}^{z} y_j \;\in [0,1],
\]
which can be directly compared with the binary Tuy score and the Soft-Tuy score on a common scale. In addition, the implementation reports
\[
  \operatorname{SoftTuy}(\mathcal{I}) = \frac{1}{z} \sum_{j=1}^{z} \max_{i \in \mathcal{I}} A_{ij},
\]
which averages the best coverage score provided by any single selected view for each direction, without accumulating contributions from multiple views. SoftTuy is a useful cross-method diagnostic: unlike $\bar{f}_{\mathrm{sat}}$, it does not reward complementary multi-view coverage and is therefore lower or equal. The following remark makes this ordering precise.

\begin{remark}
For any selection $\mathcal{I}$ and non-negative coverage matrix $A$,
$\bar{f}_{\mathrm{sat}}(\mathcal{I}) \ge \operatorname{SoftTuy}(\mathcal{I})$.
\end{remark}
\begin{proof}
Fix any direction $\mu_j$ and let $m_j = \max_{i\in\mathcal{I}} A_{ij}$ and
$s_j = \sum_{i\in\mathcal{I}} A_{ij}$.  Because all entries are non-negative,
$s_j \ge m_j$.  Since $A_{ij}\in[0,1]$ implies $m_j \le 1$, applying
$\min(1,\cdot)$ to both sides gives
$c^{\mathrm{sat}}_j = \min(1,s_j) \ge \min(1,m_j) = m_j = c^{\mathrm{soft}}_j$.
Averaging over all $j$ gives $\bar{f}_{\mathrm{sat}}(\mathcal{I}) \ge
\operatorname{SoftTuy}(\mathcal{I})$.  Equality holds for the averaged metrics
if and only if, for every sampled direction, either $s_j=m_j$ or $m_j=1$.
Equivalently, no direction receives additional positive unsaturated support
beyond its best selected view.
\end{proof}

\paragraph{Three coverage readouts.}
The three scalar metrics reported throughout this work differ only in how they aggregate per-direction scores across a selected view set $\mathcal{I}$. For a fixed direction $\mu_j$, let the selected views induce scores $\{A_{ij}\}_{i \in \mathcal{I}}$. Then
\[
  c^{\mathrm{bin}}_j = \mathbb{I}\!\left[\max_{i \in \mathcal{I}} A_{ij} > 0\right], \qquad
  c^{\mathrm{soft}}_j = \max_{i \in \mathcal{I}} A_{ij}, \qquad
  c^{\mathrm{sat}}_j = \min\!\left(1, \sum_{i \in \mathcal{I}} A_{ij}\right).
\]
Binary Tuy retains only hit-or-miss information. Soft-Tuy records the best single-view support per direction. Saturated coverage additionally rewards complementary partial contributions from multiple views, but caps the total at one. The normalized metrics are means of $c^{\mathrm{bin}}_j$, $c^{\mathrm{soft}}_j$, and $c^{\mathrm{sat}}_j$ over all $z$ sampled directions. \cref{fig:metric_aggregation_concept} illustrates all three.

\begin{figure}[t]
  \centering
  \includegraphics[width=\linewidth]{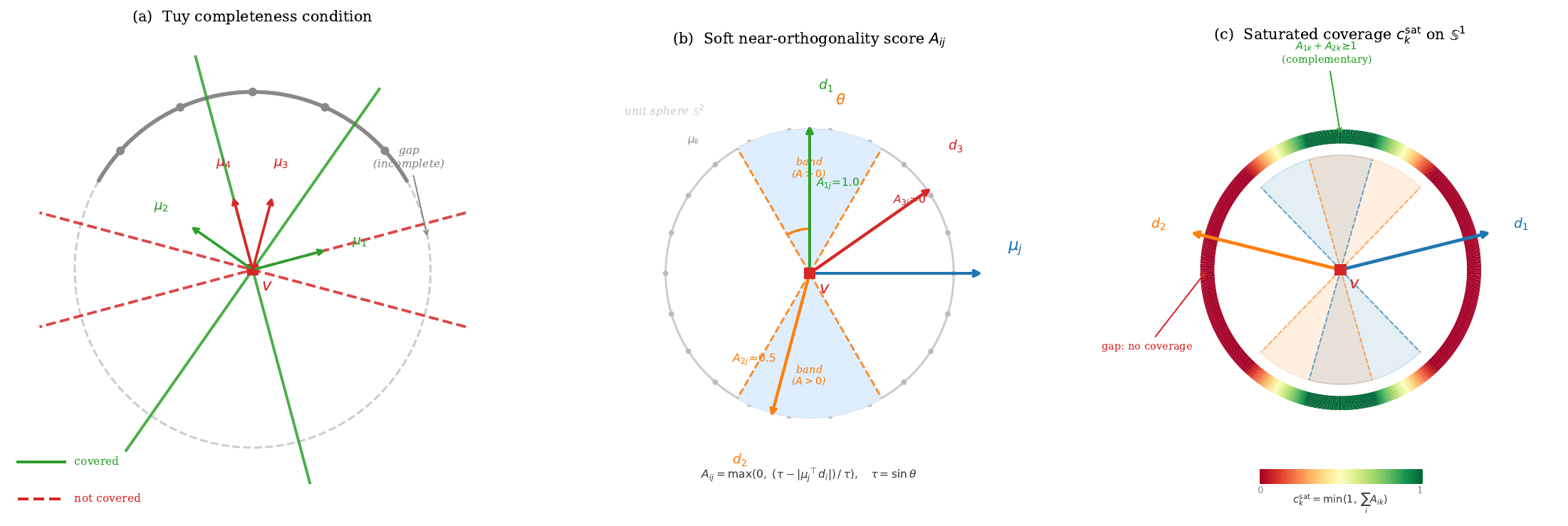}
  \caption{From the classical Tuy condition to soft coverage scores on the unit sphere.
  \textbf{(a)}~The Tuy completeness condition requires that every Radon plane
  through voxel~$v$ intersects the source trajectory.  A plane with normal
  $\mu$ is covered when the arc contains a source position at angle
  $\mu\pm90^\circ$, i.e.\ a source lying on that plane.  The planes $\mu_1$ and
  $\mu_2$ (solid green) are covered because perpendicular source positions
  (${\approx}105^\circ$ and ${\approx}55^\circ$) exist on the arc; the planes $\mu_3$
  and $\mu_4$ (dashed red) are not covered because the required perpendicular
  positions (${\approx}165^\circ$ and ${\approx}195^\circ$) fall in the gap.
  \textbf{(b)}~Discretising the Radon plane normals onto the unit sphere
  $\mathbb{S}^2$ yields sampled directions~$\mu_j$ (grey dots).  A
  selected source contributes a ray direction~$d_i$ from voxel~$v$; source
  $d_i$ covers direction~$\mu_j$ if $|{\mu_j}^\top d_i|$ is smaller than
  the threshold~$\tau=\sin\theta$, i.e.\ the ray lies within the two
  equatorial bands of~$\mu_j$.  The soft score $A_{ij}$ decays linearly from
  one (perfect orthogonality, $d_1$) to zero at the band boundary ($d_3$).
  \textbf{(c)}~Saturated coverage $c^{\mathrm{sat}}_j = \min(1,\sum_i A_{ij})$
  visualised as a heatmap on the unit circle for two selected views
  $d_1$~(blue) and $d_2$~(orange).  Directions covered by both views
  accumulate complementary support and saturate (green); directions in the
  gap between the two view bands receive no coverage (red).  Partial coverage
  from a single view produces intermediate values (yellow).}%
  \label{fig:metric_aggregation_concept}
\end{figure}

\paragraph{Worst-case objective (max--min coverage).}
The sum objective above optimizes average directional coverage and may admit solutions with rare but large directional gaps if most directions are well-covered. To explicitly penalize such gaps, we introduce a worst-case objective that maximizes the minimum coveredness across all target directions. We add a scalar variable $t \in [0,1]$ representing a guaranteed lower bound on directional coverage, with the additional constraint $t \le y_j$ for all $j$.
\begin{equation}
  \begin{aligned}
    \max_{x,y,t} \quad & t \\
    \text{s.t.} \quad
    & t \le y_j, && \forall j \in \{1,\dots,z\} \\
    & y_j \le \sum_{i=1}^m A_{ij} x_i, && \forall j \in \{1,\dots,z\} \\
    & \sum_{i=1}^m x_i \le k \\
    & x_i \in \{0,1\}, \quad y_j \in [0,1], \quad t \in [0,1]
  \end{aligned}
  \tag{MILP-WorstCase}
  \label{eq:milp_worstcase}
\end{equation}
This formulation is more conservative and directly targets the elimination of large uncovered regions in normal space by improving the least-covered direction. In the implementation, a robust variant is used by default: directions that are globally uncoverable under the current validity mask (all-zero coverage columns) can be excluded from the $t \le y_j$ floor constraints, and a secondary tie-break objective maximizes mean coveredness after maximizing $t$.

\subsubsection{Extension to Multiple ROIs}

The MILP formulation extends naturally to trajectories that must serve several disjoint ROIs simultaneously. This is particularly useful when targets are spatially close, such as adjacent vertebrae or neighboring defects in a casting, because their angular requirements partially overlap and a joint solution can find views that contribute to multiple ROIs at once. For targets so widely separated that their coverage requirements are geometrically incompatible, separate acquisitions will generally outperform any joint trajectory.

\paragraph{Cluster construction.}
Given $Q$ ROI centres $\{v^1, \dots, v^Q\}$, geometrically close ROIs are first grouped into $C \le Q$ clusters to control the size of the resulting program. ROIs whose centres lie within a distance threshold $d_\mathrm{fuse}$ are assigned to the same cluster. Each cluster $c$ with member set $\mathcal{C}_c \subseteq \{1,\ldots,Q\}$ is represented by a centroid $\bar{v}_c = |\mathcal{C}_c|^{-1}\sum_{q \in \mathcal{C}_c} v^q$ and an inflated effective radius
\[
  r_c = r + \max_{q \in \mathcal{C}_c} \|v^q - \bar{v}_c\|,
\]
where $r$ is the nominal ROI radius. The coverage matrix $A^c \in [0,1]^{m \times z_c}$ and its angular tolerance are derived from $\bar{v}_c$ and $r_c$ exactly as in the single-ROI case, ensuring the cluster conservatively covers every member. The validity mask for the cluster requires a projection to be valid for all members simultaneously. Without fusion, $C = Q$ and each ROI forms its own singleton cluster.

\paragraph{Joint optimisation program.}
With per-cluster weights $w_c > 0$ and shared binary selection variables $x_i$, the joint program reads
\[
  \begin{aligned}
    \max_{x,y} \quad & \sum_{c=1}^C w_c \sum_{j=1}^{z_c} y_j^c,\\
    \text{s.t.} \quad & y_j^c \le \sum_{i=1}^m A_{ij}^c \, x_i, \quad &&c=1,\ldots,C,\;
j=1,\ldots,z_c,\\
    & \sum_{i=1}^m x_i \le k,\\
    & x_i \in \{0,1\}, \quad y_j^c \in [0,1],
  \end{aligned}
  \tag{MILP-Multi-ROI}
  \label{eq:milp_multiroi}
\]
where $z_c$ is the number of sampled directions for cluster $c$. The selection variables $x_i$ are shared across all clusters, so the solver identifies views that contribute to multiple targets simultaneously. The number of constraints grows linearly with $\sum_c z_c$; clustering directly controls this scaling at the cost of a more conservative coverage model.

\paragraph{Weighting strategies.}
The weights $w_c$ determine how the objective balances coverage across clusters. \emph{Distance-weighted} fusion sets $w_c = |\mathcal{C}_c|$, so each cluster's contribution scales with the number of ROIs it represents; every individual ROI thereby carries equal weight in the objective regardless of how the grouping turned out. \emph{Uniform} fusion merges all $Q$ ROIs into a single cluster ($C=1$, $w_1=1$), treating the entire target set as one inflated region without distinguishing between tightly and loosely grouped members.

\subsection{Optimization Strategy: Branch-and-Cut}\label{sec:branch_and_cut}

The projection selection problem formulated in~\cref{eq:milp_basic} is a MILP. Unlike convex programs where every local optimum is also global, general MILPs are NP-hard~\cite{Garey1979ComputersAI} and no polynomial-time algorithm is known for arbitrary instances. To certify globally optimal trajectories for moderate problem sizes, we employ branch-and-cut, which integrates combinatorial tree search with polyhedral cutting-plane methods~\cite{Schrijver1998Theory, Huang2021BranchAB}.

The algorithm begins with the LP relaxation, obtained by replacing the integrality constraints $x_i \in \{0,1\}$ with $x_i \in [0,1]$. This relaxation is solvable in polynomial time and its optimal value is a rigorous upper bound on the best integer-feasible objective. The search then proceeds by branching: fractional variables are recursively fixed to $0$ or $1$, growing a binary search tree. Subtrees whose LP upper bound does not exceed the best integer solution found so far are pruned. Cutting planes are added dynamically to tighten the LP relaxation and reduce tree size, approximating the convex hull of the integer-feasible set~\cite{Schrijver1998Theory}.

A critical advantage of this exact approach over heuristic baselines (such as greedy selection) is the provision of a certified optimality gap, defined as
\begin{equation}
  \text{Gap} = \frac{\text{Upper Bound} - \text{Best Feasible Objective}}{\text{Upper Bound}}.
\end{equation}
This metric serves as a rigorous stopping criterion, allowing the trade-off between computation time and solution quality to be explicitly managed. In our implementation, all MILP instances are solved using the Gurobi optimizer~\cite{gurobi}.

%% file: content/experiments.tex

\section{Experimental Setup}\label{sec:experiments}

All experiments were carried out on a simulated cone-beam CT data set of an aluminium component drawn from the ABC dataset~\cite{Koch2019ABC}. Projections were generated by forward-projecting the volume through DiffCT's cone-beam projector~\cite{diffct2025} under a monochromatic Beer-Lambert model. The 800 candidate source positions form a quasi-uniform Fibonacci sphere, providing isotropic angular coverage as the candidate pool for all selection methods. Six target regions of interest were used throughout all optimization, validity, and reconstruction evaluations. All acquisition, object, and reconstruction parameters are listed in \cref{tab:sim_params}.

\begin{table}[t]
  \centering
  \caption{Simulation and reconstruction parameters used throughout all experiments.}
  \label{tab:sim_params}
  \begin{tabular}{ll}
    \hline
    \multicolumn{2}{l}{\textit{Scan geometry}} \\
    \hline
    Source-to-isocenter distance (SID) & $2000\,\mathrm{mm}$ \\
    Source-to-detector distance (SDD)  & $4000\,\mathrm{mm}$  \\
    Detector                           & $256\times256\,\mathrm{px}$, $0.9\,\mathrm{mm}$ pixel pitch \\
    Candidate views                    & 800, Fibonacci sphere \\
    \hline
    \multicolumn{2}{l}{\textit{Object}} \\
    \hline
    Material                           & Aluminium \\
    Attenuation coefficient            & $\mu \approx 0.416\,\mathrm{mm}^{-1}$ (${\approx}36\,\mathrm{keV}$) \\
    Forward projection                 & Monochromatic Beer-Lambert (DiffCT~\cite{diffct2025}) \\
    \hline
    \multicolumn{2}{l}{\textit{Reconstruction (ASD-POCS~\cite{Sidky2008})}} \\
    \hline
    Voxel grid                         & $384^3$, $0.3\,\mathrm{mm}$ isotropic \\
    Outer iterations                   & 10 \\
    SART sweeps per outer iteration    & 2 \\
    Projection subsets                 & 8 \\
    SART relaxation                    & 0.8 (normalized) \\
    TV steps per outer iteration       & 1 \\
    TV parameters                      & $\alpha = 0.005$, $\epsilon = 0.002$ \\
    \hline
    \multicolumn{2}{l}{\textit{Occluders (stress test)}} \\
    \hline
    Type                               & Axis-aligned rectangular plates \\
    Thickness                          & $14\,\mathrm{mm}$ \\
    Attenuation coefficient            & $\mu = 2.5\,\mathrm{mm}^{-1}$ (${\approx}6\times$ aluminium) \\
    \hline
  \end{tabular}
\end{table}

To study method robustness under varying attenuation conditions, we constructed a controlled occlusion stress test on the same simulated object. Starting from the original volume, we generated four conditions (\texttt{none}, \texttt{mild}, \texttt{moderate}, \texttt{severe}; \cref{fig:occlusion_stress_asd_pocs_800_views}) by inserting rectangular plate occluders ($\mu = 2.5\,\mathrm{mm}^{-1}$, approximately six times the aluminium attenuation) of 14\,mm thickness outside the estimated object bounding box. The reconstruction target, projection order, number of views, source-detector poses, detector settings, and full DiffCT forward-projection pipeline were kept fixed; only the volume content was modified. This isolates attenuation-induced visibility degradation from trajectory or geometry changes.

\begin{figure}[t]
  \centering
  \includegraphics[width=\linewidth]{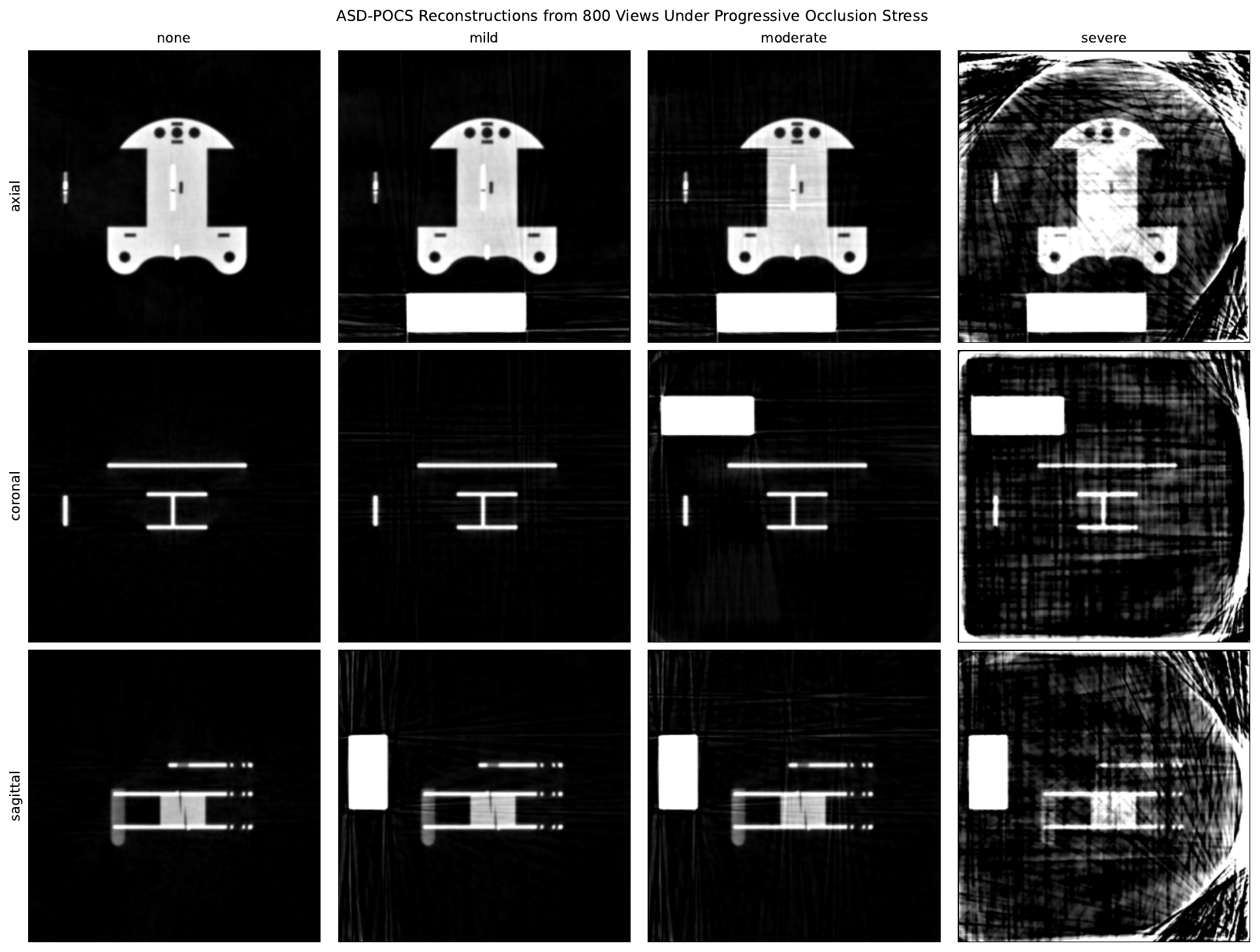}
  \caption{ASD-POCS reconstructions from all 800 calibrated views for the controlled occlusion stress test. Columns show the unoccluded, mild, moderate, and severe occlusion conditions; rows show matched axial, coronal, and sagittal centre sections.}%
  \label{fig:occlusion_stress_asd_pocs_800_views}
\end{figure}

Validity was evaluated in detector space using the same ROI projection framework as in the optimization pipeline, with $\eta=0.25$. The threshold $\alpha$ was estimated once as the $95^\mathrm{th}$ percentile of all ROI-pixel absorption values over the unoccluded baseline and then held fixed across all occlusion levels. Under this protocol, the realized mean ROI validities were $96.83\%$ (\texttt{none}), $77.71\%$ (\texttt{mild}), $68.96\%$ (\texttt{moderate}), and $56.19\%$ (\texttt{severe}), corresponding to a mean of $25$, $178$, $248$, and $350$ excluded projections per ROI, a monotone increase in visibility degradation under an otherwise unchanged acquisition setup. The full set of study families run on this data is listed in \cref{tab:full_suite_overview}.

\begin{table}[t]
  \centering
  \caption{Overview of all study families. Each row names a family, the number of distinct experimental scenarios it comprises including all four occlusion levels and all evaluated methods.}%
  \label{tab:full_suite_overview}
  \resizebox{\linewidth}{!}{\input{content/assets/experiments_20260410/tables/table_suite_overview_full.tex}}
\end{table}

All study families listed in \cref{tab:full_suite_overview} were evaluated across all four occlusion levels; the scenario counts in the table reflect this together with the evaluated methods. Binary Greedy is additionally included as a paired diagnostic: it isolates the contribution of the binary completeness model before exact optimization and supplies the feasible incumbent used to warm-start the binary MILP. The study families cover four questions. The main single-ROI budget comparison characterises how coverage objective and selection quality relate as the projection budget varies. A matched reconstruction evaluation transfers those results to the image domain. A multi-target study extends the framework to simultaneous optimization over multiple ROIs. Controlled ablations isolate sensitivity to resolution demand, attenuation-based validity constraints, directional sampling density, and the worst-case objective.

\paragraph{Code, data, and ethics statement.}
The experiments use simulated projections of an industrial CAD component from the ABC dataset; no patient data, human-subject data, consent, or ethics approval is involved. To support reproducibility, the code release accompanying this work will include the scripts used to generate candidate views, ROI definitions, occlusion volumes, DiffCT forward projections, Gurobi selection instances and solver settings, reconstruction parameters, and the tables and figures reported here. Large generated projection and reconstruction arrays will be accompanied by generation scripts and metadata rather than treated as manually curated input data.

Throughout all study families, selection quality is characterised by four metrics. Binary Tuy measures hard directional completeness. Soft Tuy measures continuous directional support averaged over the sampled directions. Saturated coverage is the quantity directly optimized by Greedy and soft MILP, reported on a normalized scale; binary selections are additionally cross-evaluated on the soft $A$ matrix to enable direct comparison. ESR is the mean ROI value in millimetres. Image reconstruction is performed for study families in which image quality is directly informative; repeated projection subsets are deduplicated before reconstruction. The reference volume is reconstructed once from all 800 views; subset reconstructions use the same fixed parameter set (\cref{tab:sim_params}). We report MSE, PSNR, and three-dimensional SSIM over fixed physical ROI sections.

%% file: content/assets/experiments_20260410/tables/table_suite_overview_full.tex
\begin{tabular}{lrrp{7.3cm}}
\hline
Family & Scenarios & Solver rows & Purpose \\
\hline
k\_vs\_tuy\_completeness & 120 & 240 & main budget sweep for matched single-ROI comparison \\
multi\_roi\_fusion\_ablation & 5 & 10 & fusion strategy comparison for multi-ROI MILP \\
multi\_voi & 9 & 18 & clustered multi-target comparison \\
multi\_voi\_scaling & 12 & 24 & scaling with increasing number of ROIs \\
resolution\_sweep\_roi\_pool & 36 & 72 & f\_min sweep over the fixed cleaned ROI pool \\
sphere\_sampling\_ablation\_roi\_pool & 36 & 72 & directional sampling ablation over the fixed cleaned ROI pool \\
validity\_ablation\_roi\_pool & 36 & 72 & attenuation-validity ablation over the fixed cleaned ROI pool \\
worst\_case\_ablation & 18 & 36 & selected-budget worst-case objective ablation \\
\hline
\end{tabular}

%% file: content/results.tex

\section{Results}\label{sec:results}

All results derive from the simulated cone-beam CT setup described in \cref{sec:experiments}. Within this controlled setting, observed differences between methods reflect completeness modelling, objective formulation, and optimization strategy rather than geometric or detector-coverage artefacts.

\subsection{Main Single-ROI Budget Comparison}\label{sec:main_budget}

Soft and binary completeness objectives optimize fundamentally different quantities; this experiment tests whether that difference translates into a consistent and measurable gap in coverage quality and matched reconstruction as the projection budget increases across the studied range.

\begin{figure}[htbp]
  \centering
  \includegraphics[width=\linewidth]{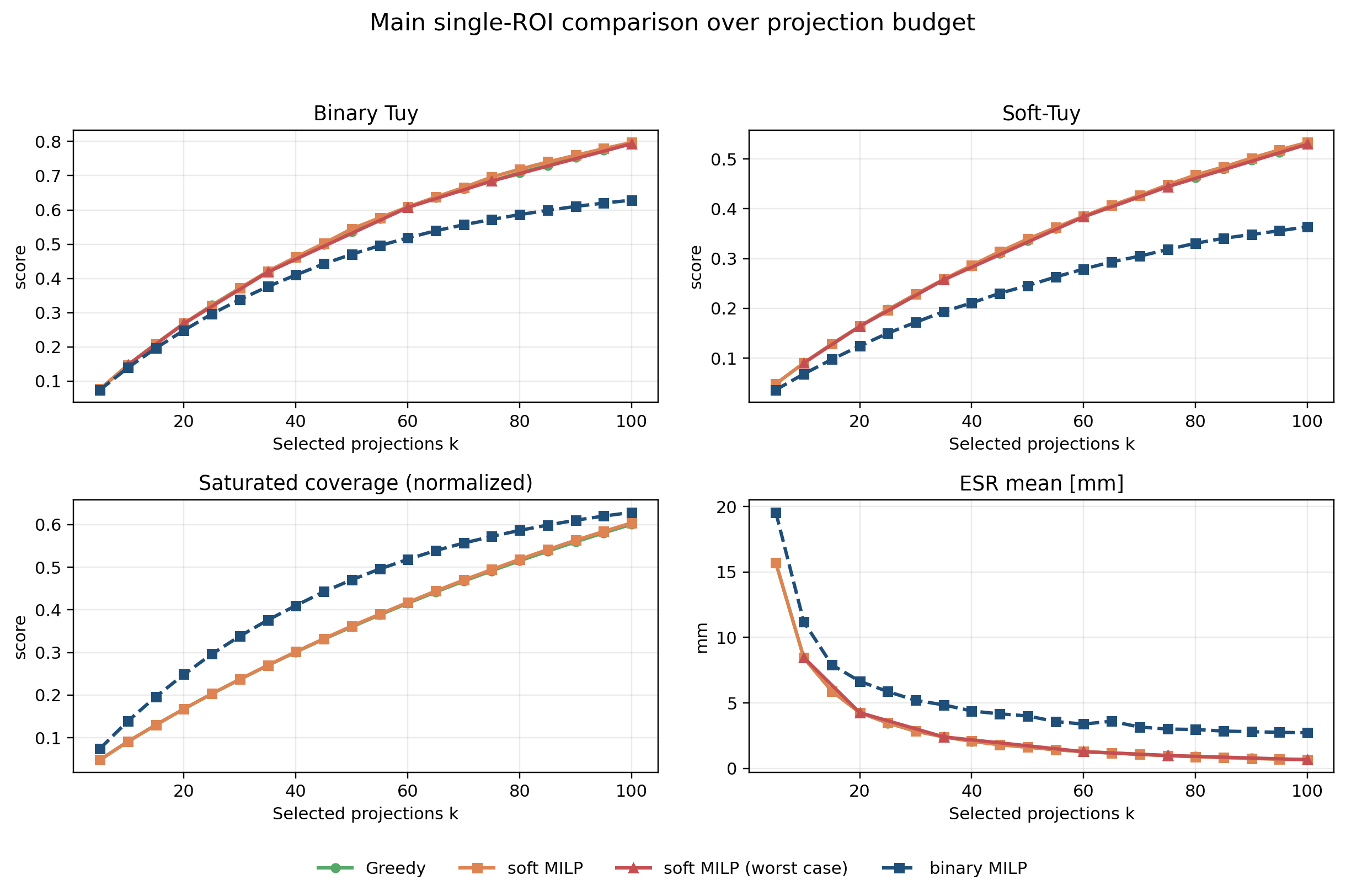}
  \caption{Main single-ROI budget comparison across projection budgets $k \in [5,100]$. Coverage metrics for soft Greedy and soft MILP follow a concave curve, with large gains up to $k \approx 60$ and diminishing returns beyond, consistent with the submodular objective. ESR decreases steeply in the low-budget range and flattens near the Nyquist-adequate coverage threshold. Soft Greedy and soft MILP track each other closely across all panels. The gap between soft methods and binary MILP widens with increasing budget, and binary MILP shows substantially higher point-to-point variance. The saturated coverage trace is omitted for the worst-case formulation as it optimizes a non-standard variant of that objective.}
  \label{fig:k_main_selection_metrics_20260410}
\end{figure}

\cref{fig:k_main_selection_metrics_20260410} and \cref{tab:k_checkpoints_reco_20260410} summarise the results. Two patterns are consistent throughout. First, all soft formulations follow a concave improvement curve, with most coverage gain accumulating before $k=60$ and substantially smaller returns beyond, a direct consequence of the submodular structure of saturated coverage. Soft Greedy and soft MILP track each other closely at every checkpoint, indicating that marginal-gain greedy already extracts nearly all available gain from the soft objective without exact optimization. ESR falls below the target resolution $f_\mathrm{min} = 1\,\mathrm{mm}$ (the Nyquist-motivated threshold derived from the sampling model in \cref{sec:completeness}) only between $k=60$ and $k=100$ for the soft methods, meaning the selection framework operates in the resolution-limited regime for most of the studied budget range. The reconstruction metrics follow the same ordering, with ROI PSNR increasing steadily with budget for all soft formulations. The worst-case soft MILP variant is included throughout as the third soft formulation; its comparative behaviour relative to the mean-objective variants is specific to the objective's minimax structure and is analysed in detail in \cref{sec:ablation}.

\begin{table}[thbp]
  \centering
  \caption{Representative checkpoints from the main single-ROI comparison (unoccluded stage). Selection metrics (Binary Tuy, Soft Tuy, ESR) are mean $\pm$ std over the target configuration pool. Reconstruction metrics (MSE, PSNR, SSIM) are reported without variance. Opt.\ gap is the mean Gurobi-reported MIP gap at termination; soft Greedy has no gap (greedy algorithm) and the worst-case formulation does not yield a standard Gurobi bound.}
  \label{tab:k_checkpoints_reco_20260410}
  \resizebox{\linewidth}{!}{\input{content/assets/experiments_20260410/tables/table_k_checkpoints_reconstruction.tex}}
\end{table}

Second, binary MILP trails all soft formulations at every checkpoint by a wide margin, and its inter-instance variance is substantially larger than that of any soft method. The optimality gap column in \cref{tab:k_checkpoints_reco_20260410} reveals that this gap is not caused by incomplete solver convergence, since binary MILP reaches certified optimal or near-optimal binary solutions in almost all instances, with mean MIP gaps below $1\%$ across all checkpoints. Soft MILP's gap is slightly larger (up to $2.05\%$ at $k=100$, where the five-minute time limit is exhausted on all three instances) but remains within the range certified by the greedy analysis in \cref{tab:greedy_milp_certificates_occlusion}.

A direct numerical comparison of the two MIP gaps would be misleading, since the two solvers optimize over fundamentally different objectives. Soft MILP maximizes a continuous saturated coverage score bounded in $[0,1]$, while binary MILP maximizes an integer count of covered plane normals. The direction of the budget trend further separates the two formulations. Soft MILP's gap grows monotonically with $k$ ($0.99\% \to 1.00\% \to 2.05\%$), reflecting the increasing combinatorial complexity as the branch-and-cut search has to account for more projection candidates. Binary MILP shows no such growth; its gap stays below $1\%$ at every checkpoint regardless of budget, confirming that the binary integer programme is structurally easier to certify within the same time budget. The soft formulation therefore produces solutions that are simultaneously less tightly certified and dramatically better in coverage quality. This demonstrates that the observed performance difference is a consequence of objective formulation. Under the continuous coverage and reconstruction metrics used here, the optimal binary trajectory is weaker because the binary model discards graded directional support. The high inter-instance variance in binary MILP metrics reflects genuine geometric variability across the three evaluated ROIs under the binary model's hard threshold, not solver failure.

The same soft-method advantage holds across all four occlusion stages, as shown in \cref{tab:cross_stage_selection_k100}. Soft Greedy and soft MILP are closely matched at every stage, Binary Tuy decreases only from $0.793$ to $0.753$ from the unoccluded to the severe stage, and mean ESR stays below $0.81\,\mathrm{mm}$ throughout. Binary MILP is substantially more sensitive to occlusion, with mean ESR rising from $2.72 \pm 1.38\,\mathrm{mm}$ at the unoccluded stage to $14.54 \pm 8.43\,\mathrm{mm}$ under severe occlusion, a more-than-fivefold increase, and Binary Tuy variance grows from $\pm 0.087$ at the unoccluded stage to $\pm 0.157$ at moderate, reflecting the diverging sensitivity of different ROIs to the tightening validity constraints rather than solver failures. The optimality gap column in \cref{tab:cross_stage_selection_k100} shows the reverse trend for binary MILP: its gap falls from $0.77\%$ under no occlusion to $0.13\%$ under severe occlusion. As more projections are excluded by the validity constraints, the feasible set shrinks and the binary integer programme becomes easier to certify, even though the resulting trajectory quality collapses.

\begin{table}[thbp]
  \centering
  \caption{Selection metrics at $k=100$ for all four occlusion stages. Soft methods are stable across stages; binary MILP degrades strongly under moderate and severe occlusion. Opt.\ gap is the mean Gurobi-reported MIP gap at termination; its decrease under heavier occlusion reflects fewer valid candidates, not improved solver convergence.}
  \label{tab:cross_stage_selection_k100}
  \resizebox{\linewidth}{!}{\input{content/assets/experiments_20260410/tables/table_cross_stage_selection_k100.tex}}
\end{table}

\subsection{Greedy versus Exact Optimization}

The MILP certificates confirm that the greedy approximation is already near-optimal on the studied instances. Because both soft Greedy and soft MILP optimize the same saturated coverage objective, the exact solver provides the certified reference value against which greedy is assessed rather than changing the qualitative character of the solution. On the unified saturated coverage scale, soft Greedy reaches $0.601 \pm 0.002$ at 100 projections, achieving $99.8\%$ of the soft-MILP objective on average across all paired runs. The paired comparison between binary Greedy and soft Greedy further shows that the binary completeness model, rather than any solver artefact, is responsible for the performance gap. Binary Greedy trails soft Greedy at every checkpoint in all reported metrics. At 100 projections it reaches Binary Tuy $0.626 \pm 0.083$ against $0.793 \pm 0.005$ for soft Greedy, and Soft Tuy $0.367 \pm 0.036$ against $0.530 \pm 0.004$. The soft objective thus produces stronger directional coverage even under the hard binary completeness measure, with lower variance throughout.

The MILP certificates make this relationship more precise (\cref{tab:greedy_milp_certificates_occlusion}). The theoretical guarantee for monotone submodular maximization under a cardinality constraint is the worst-case bound $1-1/e \approx 0.632$, but the observed instances are far from this adversarial regime. Across the four occlusion levels of the main budget study, soft Greedy exactly matched the soft-MILP objective in 72 of 240 paired cases, all certified optimal by the MILP solver. In the remaining cases soft MILP improved the objective only marginally, with a pooled median ratio between soft Greedy and the soft-MILP incumbent of $0.998$ and a minimum of $0.989$. Against the solver's upper bound $U$, the pooled median lower bound $f_G/U$ was $0.989$. The MILP thus does not merely compete with soft Greedy; it quantifies that the marginal-gain approximation is much closer to optimality on these CT instances than the generic worst-case guarantee would imply.

\begin{table}[htbp]
  \centering
  \caption{Instance-specific optimality assessment of marginal-gain soft Greedy against soft MILP in the main budget study. \(f_G\) is the soft Greedy saturated-coverage objective, \(f_I\) is the incumbent soft-MILP objective, and \(U\) is the MILP solver upper bound. Greedy=opt.\ counts cases where soft Greedy matches a certified optimal soft-MILP objective; MILP opt.\ counts certified optimal soft-MILP runs.}
  \label{tab:greedy_milp_certificates_occlusion}
  \resizebox{\linewidth}{!}{\input{content/assets/experiments_20260410/tables/table_greedy_milp_certificates_occlusion.tex}}
\end{table}

The matched reconstruction evaluation confirms the same ordering (\cref{tab:greedy_binary_vs_soft_20260411}). At 100 projections binary Greedy reaches $24.32\,\mathrm{dB}$ in ROI PSNR against $36.34\,\mathrm{dB}$ for soft Greedy, a gap of $12.02\,\mathrm{dB}$ consistent with the binary MILP shortfall in \cref{tab:k_checkpoints_reco_20260410}. The ESR gap is equally large: $2.74 \pm 1.36\,\mathrm{mm}$ for binary Greedy against $0.67 \pm 0.02\,\mathrm{mm}$ for soft Greedy at $k=100$. Because binary Greedy is the marginal-gain result for the binary objective with no solver time limit, this confirms that the performance difference originates in the objective formulation itself rather than in branch-and-cut approximation. The controlled validity ablation in \cref{sec:ablation} shows that the gap has two separable components, a model-structure term and a validity-amplification term that together account for the full binary shortfall. Binary Greedy is therefore not redundant with binary MILP in the analysis; it isolates the model's contribution before exact optimization and supplies the feasible warm-start incumbent for the binary MILP solver.

\begin{table}[htbp]
  \centering
  \caption{Paired comparison between binary Greedy and soft Greedy under identical target and budget conditions. Binary Greedy applies the marginal-gain rule to the legacy binary coverage matrix and supplies the feasible warm-start for the binary MILP.}
  \label{tab:greedy_binary_vs_soft_20260411}
  \resizebox{\linewidth}{!}{\input{content/assets/experiments_20260410/tables/table_greedy_binary_vs_soft.tex}}
\end{table}

\subsection{Reconstruction Transfer}

\cref{fig:k_reconstruction_20260410} reports reconstruction quality across budgets for both full-volume and ROI evaluation. The primary signal is the ROI metric, since the trajectory is optimized for a specific target region; full-volume averages pool the challenging target region with easier surrounding tissue and therefore overstate quality relative to what matters for the clinical target. The ROI ordering mirrors the coverage ordering throughout. Soft Greedy and soft MILP remain closely matched at every checkpoint; ROI PSNR and SSIM rise steeply at low-to-moderate budgets and level off towards $k=100$, consistent with the concave submodular coverage structure. Binary MILP lags all soft formulations by more than $12\,\mathrm{dB}$ in ROI PSNR at the highest budget, a gap fully attributable to the coverage model rather than solver convergence. The full-volume metrics show the same method ordering but with larger absolute values and compressed differences between methods, consistent with the surrounding tissue being easier to reconstruct and less sensitive to the choice of trajectory.

\begin{figure}[htbp]
  \centering
  \includegraphics[width=\linewidth]{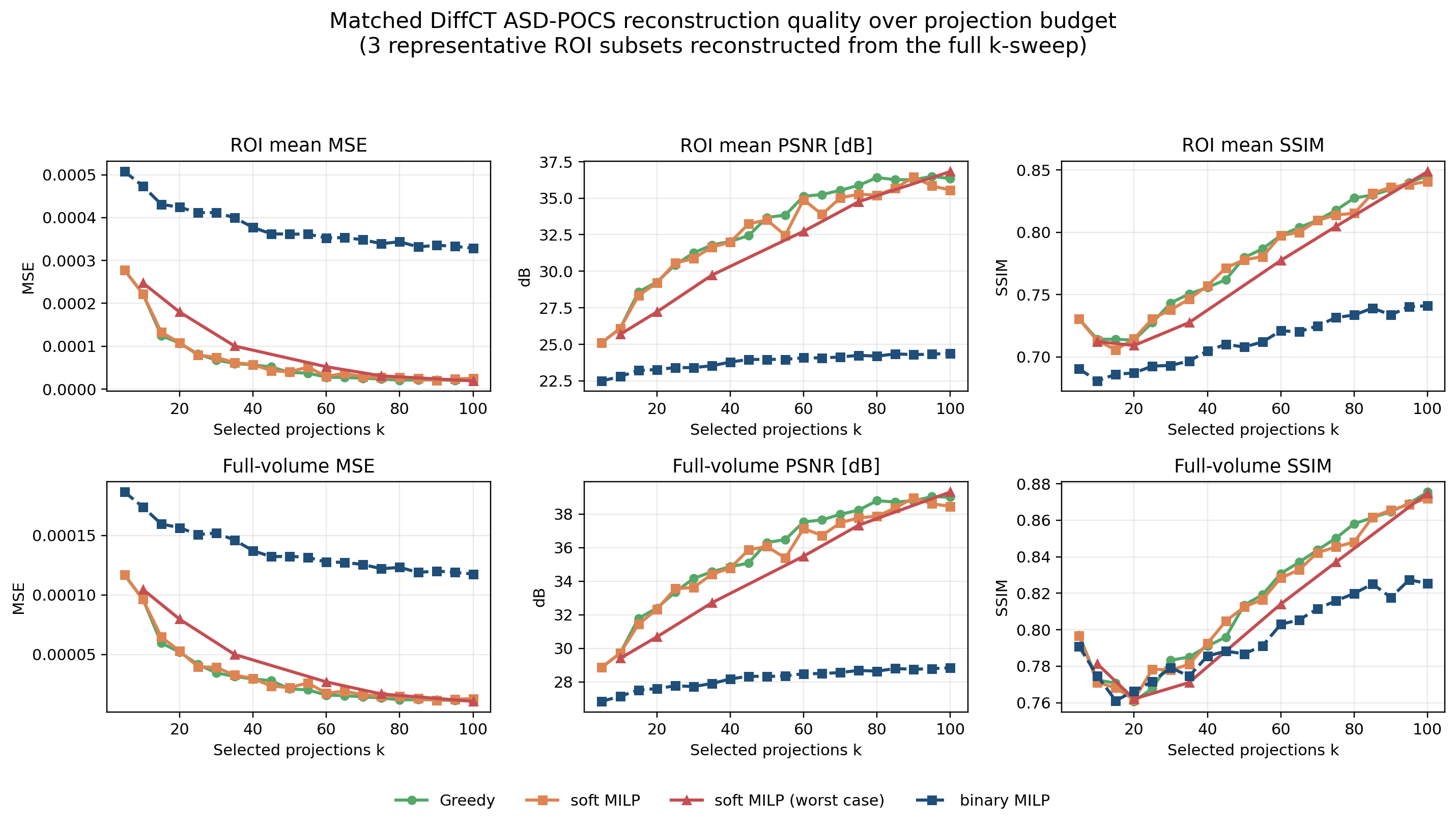}
  \caption{Matched reconstruction quality in the main single-ROI comparison, shown for MSE, PSNR, and three-dimensional SSIM. All subset reconstructions use the same geometry, grid, and reconstruction parameters as the 800-view reference volume.}
  \label{fig:k_reconstruction_20260410}
\end{figure}

The budget progression in \cref{fig:roi03_roi04_soft_milp_budget_progression} illustrates how this transfer extends across occlusion stages. Reading along each row, most of the visual gain concentrates in the lower-budget columns and the incremental improvement shrinks towards $k=100$, again reflecting the submodular structure. The more informative comparison is across rows: the number of projections required for visually coherent ROI recovery increases with occlusion severity. At the unoccluded and mild stages, $k=60$ already yields near-complete regional recovery, while under moderate occlusion a comparable result is first achieved at $k=100$. This shift occurs because validity constraints exclude well-positioned but diagnostically corrupted projections, and a larger raw budget is therefore required to assemble an equally complete set of uncorrupted views. Importantly, at the moderate stage the pipeline still achieves coherent and visually clean ROI recovery at $k=100$ despite a substantial number of external occluders, confirming that the validity-aware selection concentrates the projection budget effectively even under significant attenuation.

\begin{figure}[htbp]
  \centering
  \includegraphics[width=\linewidth]{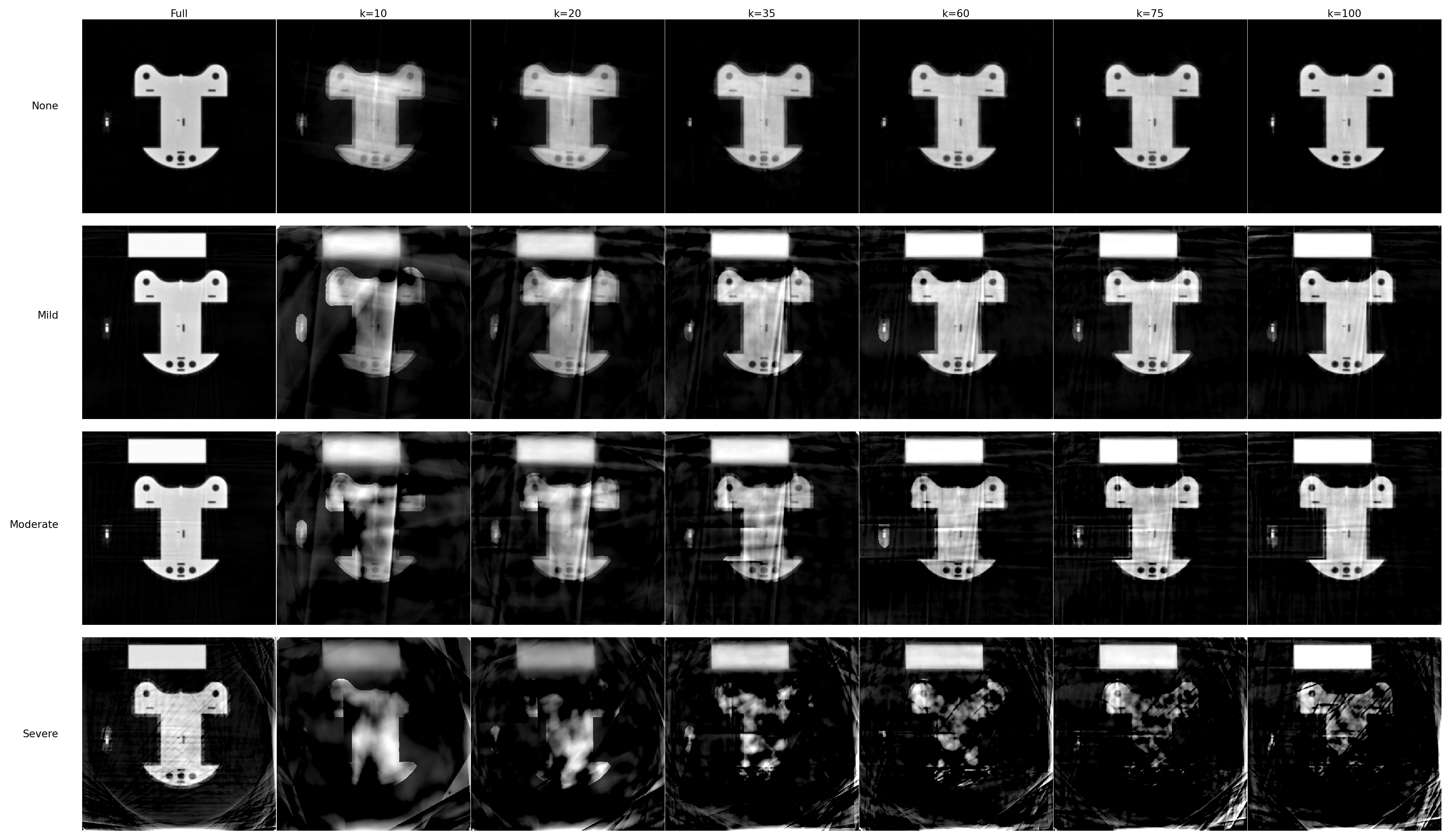}
  \caption{Budget progression of the centre axial section using the soft MILP trajectory for a representative target region across all four occlusion stages (rows) and projection budgets from 10 to 100 (columns), alongside the 800-view reference. Heavier occlusion requires a larger projection budget to achieve comparable ROI recovery: at the unoccluded and mild stages $k=60$ is near-sufficient, while under moderate occlusion coherent recovery requires $k=100$. The moderate stage at $k=100$ still produces visually clean regional reconstruction despite the additional occluders, illustrating that the validity-constrained selection concentrates projections on uncorrupted viewing angles. Intensities were robustly calibrated to the positive $99.5^\mathrm{th}$ percentile plateau of the non-occluded ASD-POCS reference.}
  \label{fig:roi03_roi04_soft_milp_budget_progression}
\end{figure}

\subsection{ESR-to-Reconstruction Correlation}

\cref{tab:esr_reco_corr_20260410} reports Pearson $r$ and Spearman $\rho$ between ESR and ROI reconstruction quality within each occlusion stage, for both mean ESR and the $Q_{0.95}$ directional ESR. The two variants tell a consistent but complementary story.

Mean ESR is a strong predictor of ROI MSE and PSNR under mild and moderate occlusion, where the method separation is large enough that the directional coverage differences translate cleanly into image-domain quality differences. At the unoccluded stage the association is weaker, and this is not a failure of the metric but a consequence of how ESR is constructed. ESR is evaluated over $\mathcal{S}_{\text{valid}}(v)$, the set of sources that pass the attenuation validity filter. Without occlusion, nearly all candidate sources remain valid, the valid sets of every method are nearly identical, and the soft methods cluster tightly in ESR space. There is simply less signal to predict, since all reasonable trajectories cover the ROI adequately and the resulting image-domain differences are small. Under occlusion, the validity filter becomes load-bearing, progressively excluding exactly the directions whose projections would introduce streaking artefacts if reconstructed through heavily attenuating paths. ESR then captures not only geometric directional coverage but the coverage that is actually diagnostically usable, which is why its predictive strength grows in lockstep with occlusion severity. The weaker correlation at the unoccluded stage is therefore a sign that the validity-aware formulation is working correctly. Under severe occlusion the Pearson correlation for MSE weakens again, while the rank-based Spearman coefficient remains high, pointing to a non-linear relationship where binary MILP becomes such a strong outlier that linear correlation understates the true monotone association. The $Q_{0.95}$ directional ESR shows modestly stronger Pearson correlations at the severe stage, particularly for PSNR and SSIM. This is interpretable, since at high attenuation isolated directional holes (captured by the 95th-percentile gap rather than the mean) are the primary driver of localized streaking artefacts, so the tail of the angular-gap distribution carries extra predictive information that the mean does not.


SSIM behaves differently from MSE and PSNR throughout. Its correlation with ESR is weaker and less consistent across stages, which is expected since SSIM is sensitive to structural patterns and contrast modulation, both of which are also shaped by the ASD-POCS regularization and not by directional coverage alone. ESR therefore predicts SSIM less reliably than the pixel-domain metrics.

\cref{fig:sphere_coverage_20260410} provides a geometric view of what drives these correlations. The Mollweide projection of per-direction angular gaps shows that at low budget, large connected regions of direction space remain severely undersampled, and the mean angular gap is far above the Nyquist threshold. At high budget the gaps shrink and become sparse and the sphere is nearly uniformly covered. The direction-space gaps propagate into mean ESR via the relation $\widehat{f}_{\mathrm{res}}(v) = 2r\,\bar{\varphi}(v)$, linking the geometric sampling structure directly to the image-domain quality differences the correlation table captures.

\begin{table}[htbp]
  \centering
  \caption{Stage-wise Pearson \(r\) / Spearman \(\rho\) between effective spatial resolution and ROI reconstruction metrics. Left columns use mean ESR; right columns use the $Q_{0.95}$ directional ESR.}
  \label{tab:esr_reco_corr_20260410}
  \resizebox{\linewidth}{!}{\input{content/assets/experiments_20260410/tables/table_esr_reco_correlations_stagewise_compact.tex}}
\end{table}

\begin{figure}[htbp]
  \centering
  \includegraphics[width=\linewidth]{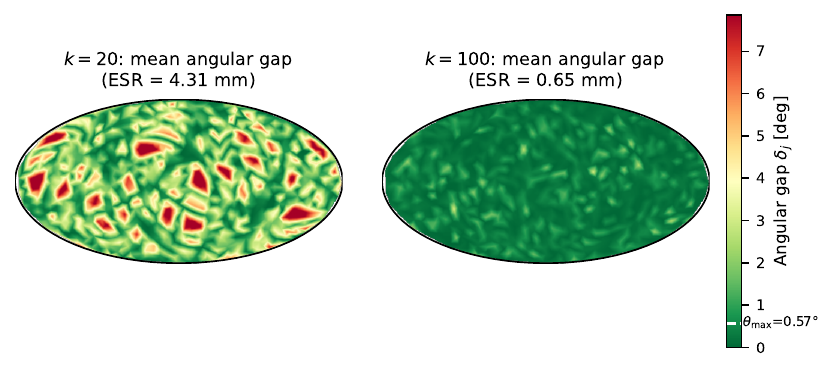}
  \caption{Per-direction angular sampling gap $\delta_j$ on the Mollweide projection of the unit sphere for a representative target region, for $k=20$ (left) and $k=100$ (right) soft Greedy selections. Colour encodes the angular gap in degrees; red indicates poor directional coverage. The dashed white line on the colour bar marks the Nyquist-motivated threshold $\theta_\mathrm{max}=f_\mathrm{min}/(2r)=0.57^{\circ}$ for the studied geometry. Panel titles state the mean angular gap and the resulting ESR.}
  \label{fig:sphere_coverage_20260410}
\end{figure}

The completeness measures are coherent and predictive at the level of broad trends, but matched reconstruction remains necessary whenever competing methods are separated by only small differences.

\subsection{Multi-Target Results}

The multi-target results (\cref{tab:multi_roi_reco_20260410}) show that the framework extends coherently to the joint setting, with the spatial arrangement of targets as the dominant factor governing the cost of serving multiple regions simultaneously.

In the clustered comparison, single-ROI soft Greedy and soft MILP set the per-region quality ceiling, since each trajectory is dedicated to a single target. Multi-ROI soft MILP operates under a strictly harder constraint and pays a quality penalty that scales with cluster spread. The six ROI centres span pairwise distances of $16$--$50\,\mathrm{mm}$ (mean $30\,\mathrm{mm}$), and this geometry drives the result. Cluster~C, whose ROIs lie within a maximum pairwise separation of $35\,\mathrm{mm}$, incurs a loss of less than $1\,\mathrm{dB}$ relative to the single-ROI best. Cluster~B, which spans up to $50\,\mathrm{mm}$ and includes ROIs at geometrically opposing positions, loses more than $2\,\mathrm{dB}$. The penalty is therefore a geometric effect of target spread, not a general limitation of the joint formulation. Multi-ROI binary MILP is insensitive to cluster geometry throughout, producing nearly identical scores across all three clusters, confirming that its coverage model distributes projections uniformly rather than adapting to the spatial arrangement of targets.

In the scaling study, quality for the soft methods is broadly stable as the number of simultaneously served regions grows from one to six. Multi-ROI soft MILP marginally improves over single-ROI optimisation at intermediate counts, consistent with joint optimisation exploiting shared directional support when targets are well-distributed. A quality dip at $q=5$ affects all soft methods equally and persists into $q=6$ for multi-ROI soft MILP, pointing to a geometrically conflicting target subset rather than a monotone capacity limit of the framework. Multi-ROI binary MILP is insensitive to $q$ but remains consistently $5$--$6\,\mathrm{dB}$ below the soft methods across the entire range.

The fusion study addresses the practically relevant case in which a single shared trajectory must serve all regions. The ``off (no fusion)'' baseline means independent per-ROI trajectories for soft Greedy (not realisable as a single acquisition) and unfused joint optimisation for multi-ROI soft MILP; the meaningful comparison is therefore between the unfused joint baseline and the fused variants. Distance-weighted fusion improves over the unfused baseline for both methods and, crucially, a single fused trajectory matches or exceeds the quality of independent per-ROI acquisitions. Uniform weighting, which ignores the spatial spread of up to $50\,\mathrm{mm}$ between targets, costs approximately $2\,\mathrm{dB}$ relative to the unfused baseline. The choice of aggregation strategy therefore matters more than the choice between greedy and exact optimisation in this setting, and distance-weighted fusion is the recommended approach whenever one acquisition must cover multiple spatially distributed regions.

\begin{table}[htbp]
  \centering
  \caption{Multi-target reconstruction quality at $k=25$ projections (ROI mean PSNR [dB]).
  Top: clustered comparison.
  Centre: scaling with the number of simultaneously served regions.
  Bottom: fusion strategy comparison.}
  \label{tab:multi_roi_reco_20260410}
  \resizebox{\linewidth}{!}{%
  \begin{tabular}{llcccc}
    \hline
    \multicolumn{2}{l}{\textit{Clustered comparison}} & Cluster A & Cluster B & Cluster C & Mean \\
    \hline
    & soft Greedy             & 26.89 & 28.98 & 29.29 & 28.39 \\
    & soft MILP               & 26.70 & 29.27 & 29.44 & 28.47 \\
    & Multi-ROI soft MILP     & 26.82 & 26.93 & 28.67 & 27.47 \\
    & Multi-ROI binary MILP   & 22.63 & 22.64 & 22.60 & 22.62 \\
    \hline
    \multicolumn{2}{l}{\textit{Scaling ($q$ = no.\ of ROIs served)}} & $q=1$ & $q=3$ & $q=5$ & $q=6$ \\
    \hline
    & soft Greedy             & 28.27 & 28.22 & 26.89 & 28.62 \\
    & soft MILP               & 28.44 & 28.99 & 26.70 & 28.34 \\
    & Multi-ROI soft MILP     & 28.43 & 29.21 & 26.82 & 26.80 \\
    & Multi-ROI binary MILP   & 23.36 & 22.69 & 22.63 & 22.63 \\
    \hline
    \multicolumn{2}{l}{\textit{Fusion strategy ($q=6$)}} & \multicolumn{2}{c}{soft Greedy} & \multicolumn{2}{c}{Multi-ROI soft MILP} \\
    \hline
    & Off (no fusion)         & \multicolumn{2}{c}{28.62} & \multicolumn{2}{c}{28.40} \\
    & Distance-weighted       & \multicolumn{2}{c}{29.56} & \multicolumn{2}{c}{29.80} \\
    & All ROIs, uniform       & \multicolumn{2}{c}{26.56} & \multicolumn{2}{c}{26.80} \\
    \hline
  \end{tabular}}
\end{table}

\FloatBarrier

%% file: content/assets/experiments_20260410/tables/table_k_checkpoints_reconstruction.tex
\begin{tabular}{llccccccc}
\hline
$k$ & Method & Binary Tuy & Soft-Tuy & ESR$_{mean}$ [mm] & ROI MSE & ROI PSNR [dB] & ROI SSIM & Opt.\ gap (\%) \\
\hline
20 & soft Greedy & 0.268 $\pm$ 0.003 & 0.164 $\pm$ 0.003 & 4.26 $\pm$ 0.20 & 0.00011 & 29.25 & 0.713 & -- \\
20 & soft MILP & 0.269 $\pm$ 0.003 & 0.164 $\pm$ 0.003 & 4.26 $\pm$ 0.20 & 0.00011 & 29.21 & 0.714 & 0.99 \\
20 & soft MILP (worst case) & 0.268 $\pm$ 0.003 & 0.164 $\pm$ 0.003 & 4.26 $\pm$ 0.20 & 0.00018 & 27.21 & 0.709 & n/a$^*$ \\
20 & binary MILP & 0.248 $\pm$ 0.014 & 0.124 $\pm$ 0.004 & 6.66 $\pm$ 1.30 & 0.00042 & 23.27 & 0.687 & 0.56 \\
\hline
60 & soft Greedy & 0.607 $\pm$ 0.009 & 0.384 $\pm$ 0.004 & 1.28 $\pm$ 0.03 & 0.00003 & 35.13 & 0.797 & -- \\
60 & soft MILP & 0.609 $\pm$ 0.008 & 0.385 $\pm$ 0.003 & 1.28 $\pm$ 0.03 & 0.00003 & 34.90 & 0.797 & 1.00 \\
60 & soft MILP (worst case) & 0.607 $\pm$ 0.009 & 0.384 $\pm$ 0.004 & 1.28 $\pm$ 0.03 & 0.00005 & 32.73 & 0.777 & n/a$^*$ \\
60 & binary MILP & 0.518 $\pm$ 0.049 & 0.279 $\pm$ 0.023 & 3.39 $\pm$ 1.09 & 0.00035 & 24.08 & 0.721 & 0.86 \\
\hline
100 & soft Greedy & 0.793 $\pm$ 0.005 & 0.530 $\pm$ 0.004 & 0.67 $\pm$ 0.02 & 0.00002 & 36.34 & 0.845 & -- \\
100 & soft MILP & 0.797 $\pm$ 0.004 & 0.533 $\pm$ 0.003 & 0.66 $\pm$ 0.01 & 0.00003 & 35.55 & 0.841 & 2.05 \\
100 & soft MILP (worst case) & 0.793 $\pm$ 0.005 & 0.530 $\pm$ 0.004 & 0.67 $\pm$ 0.02 & 0.00002 & 36.82 & 0.849 & n/a$^*$ \\
100 & binary MILP & 0.629 $\pm$ 0.087 & 0.364 $\pm$ 0.033 & 2.72 $\pm$ 1.38 & 0.00033 & 24.38 & 0.741 & 0.77 \\
\hline
\multicolumn{9}{l}{\footnotesize $^*$Gurobi bound not applicable to the max-min objective form.} \\
\end{tabular}

%% file: content/assets/experiments_20260410/tables/table_cross_stage_selection_k100.tex
\begin{tabular}{llcccc}
\hline
Stage & Method & Binary Tuy & Soft-Tuy & ESR$_{mean}$ [mm] & Opt.\ gap (\%) \\
\hline
None & soft Greedy & 0.793 $\pm$ 0.005 & 0.530 $\pm$ 0.004 & 0.67 $\pm$ 0.02 & -- \\
 & soft MILP & 0.797 $\pm$ 0.004 & 0.533 $\pm$ 0.003 & 0.66 $\pm$ 0.01 & 2.05 \\
 & binary MILP & 0.629 $\pm$ 0.087 & 0.364 $\pm$ 0.033 & 2.72 $\pm$ 1.38 & 0.77 \\
\hline
Mild & soft Greedy & 0.785 $\pm$ 0.003 & 0.522 $\pm$ 0.004 & 0.71 $\pm$ 0.01 & -- \\
 & soft MILP & 0.790 $\pm$ 0.009 & 0.527 $\pm$ 0.003 & 0.68 $\pm$ 0.02 & 1.49 \\
 & binary MILP & 0.619 $\pm$ 0.082 & 0.362 $\pm$ 0.035 & 2.94 $\pm$ 1.46 & 0.68 \\
\hline
Moderate & soft Greedy & 0.781 $\pm$ 0.002 & 0.519 $\pm$ 0.004 & 0.71 $\pm$ 0.03 & -- \\
 & soft MILP & 0.784 $\pm$ 0.005 & 0.520 $\pm$ 0.003 & 0.71 $\pm$ 0.03 & 1.31 \\
 & binary MILP & 0.475 $\pm$ 0.157 & 0.286 $\pm$ 0.084 & 11.40 $\pm$ 9.36 & 0.16 \\
\hline
Severe & soft Greedy & 0.753 $\pm$ 0.004 & 0.493 $\pm$ 0.006 & 0.81 $\pm$ 0.03 & -- \\
 & soft MILP & 0.759 $\pm$ 0.002 & 0.495 $\pm$ 0.006 & 0.81 $\pm$ 0.02 & 1.02 \\
 & binary MILP & 0.426 $\pm$ 0.147 & 0.258 $\pm$ 0.082 & 14.54 $\pm$ 8.43 & 0.13 \\
\hline
\end{tabular}

%% file: content/assets/experiments_20260410/tables/table_greedy_milp_certificates_occlusion.tex
\begin{tabular}{lrrrrrrr}
\hline
Occlusion & Pairs & Greedy=opt. & MILP opt. & Median $f_G/f_I$ & Min. $f_G/f_I$ & Median $f_G/U$ & Min. $f_G/U$ \\
\hline
none & 60 & 17 (28.3\%) & 49 (81.7\%) & 0.995 & 0.992 & 0.986 & 0.972 \\
mild & 60 & 15 (25.0\%) & 53 (88.3\%) & 0.997 & 0.989 & 0.988 & 0.972 \\
moderate & 60 & 22 (36.7\%) & 55 (91.7\%) & 0.999 & 0.991 & 0.989 & 0.977 \\
severe & 60 & 18 (30.0\%) & 58 (96.7\%) & 0.999 & 0.991 & 0.989 & 0.982 \\
pooled & 240 & 72 (30.0\%) & 215 (89.6\%) & 0.998 & 0.989 & 0.989 & 0.972 \\
\hline
\end{tabular}

%% file: content/assets/experiments_20260410/tables/table_greedy_binary_vs_soft.tex
\begin{tabular}{llcccccc}
\hline
$k$ & Method & Binary Tuy & Soft-Tuy & Sat. Cov. & ESR$_{mean}$ [mm] & ROI PSNR [dB] & ROI SSIM \\
\hline
20 & binary soft Greedy & 0.248 $\pm$ 0.014 & 0.124 $\pm$ 0.004 & 0.248 $\pm$ 0.014 & 6.66 $\pm$ 1.30 & 23.28 & 0.688 \\
20 & soft Greedy & 0.268 $\pm$ 0.003 & 0.164 $\pm$ 0.003 & 0.167 $\pm$ 0.003 & 4.26 $\pm$ 0.20 & 29.25 & 0.713 \\
\hline
60 & binary soft Greedy & 0.514 $\pm$ 0.048 & 0.278 $\pm$ 0.022 & 0.514 $\pm$ 0.048 & 3.41 $\pm$ 1.11 & 24.02 & 0.719 \\
60 & soft Greedy & 0.607 $\pm$ 0.009 & 0.384 $\pm$ 0.004 & 0.415 $\pm$ 0.003 & 1.28 $\pm$ 0.03 & 35.13 & 0.797 \\
\hline
100 & binary soft Greedy & 0.626 $\pm$ 0.083 & 0.367 $\pm$ 0.036 & 0.626 $\pm$ 0.083 & 2.74 $\pm$ 1.36 & 24.32 & 0.741 \\
100 & soft Greedy & 0.793 $\pm$ 0.005 & 0.530 $\pm$ 0.004 & 0.601 $\pm$ 0.002 & 0.67 $\pm$ 0.02 & 36.34 & 0.845 \\
\hline
\end{tabular}

%% file: content/assets/experiments_20260410/tables/table_esr_reco_correlations_stagewise_compact.tex
\begin{tabular}{lrrrrrrrr}
\hline
 & & \multicolumn{3}{c}{Mean ESR} & \multicolumn{3}{c}{$Q_{0.95}$ directional ESR} \\
Occlusion & $n$ & ROI MSE $r/\rho$ & ROI PSNR $r/\rho$ & ROI SSIM $r/\rho$ & ROI MSE $r/\rho$ & ROI PSNR $r/\rho$ & ROI SSIM $r/\rho$ \\
\hline
None & 150 & 0.652 / 0.685 & -0.579 / -0.685 & -0.609 / -0.856 & 0.704 / 0.758 & -0.598 / -0.759 & -0.611 / -0.886 \\
Mild & 140 & 0.979 / 0.931 & -0.617 / -0.838 & -0.398 / -0.709 & 0.970 / 0.946 & -0.637 / -0.872 & -0.402 / -0.724 \\
Moderate & 140 & 0.933 / 0.783 & -0.611 / -0.943 & -0.209 / -0.749 & 0.879 / 0.802 & -0.705 / -0.917 & -0.282 / -0.607 \\
Severe & 140 & 0.577 / 0.788 & -0.584 / -0.859 & -0.618 / -0.877 & 0.683 / 0.757 & -0.712 / -0.814 & -0.757 / -0.845 \\
\hline
\end{tabular}

%% file: content/ablation_study.tex

\section{Ablation Study}\label{sec:ablation}

\cref{tab:ablation_20260410} summarizes the controlled ablation studies using mean effective spatial resolution as a common diagnostic. Each experiment varies one modelling component at a time while holding the remainder of the selection framework fixed. The deterministic target pool introduced in \cref{sec:experiments} ensures that observed changes reflect the ablated parameter rather than incidental differences in ROI sampling.

\begin{table}[htbp]
  \centering
  \caption{Controlled ablation results: mean effective spatial resolution [mm] at $k=20$ projections on the fixed six-ROI pool. Each group varies one modelling component independently; all other settings are held at their defaults.}
  \label{tab:ablation_20260410}
  \resizebox{\linewidth}{!}{%
  \begin{tabular}{llccc}
    \hline
    Ablation & Condition & soft Greedy & soft MILP & Binary MILP \\
    \hline
    \multirow{3}{*}{Target resolution}
      & $f_{\min} = 0.5$\,mm & 4.51 & 4.52 & 7.44 \\
      & $f_{\min} = 1.0$\,mm & 4.51 & 4.44 & 7.44 \\
      & $f_{\min} = 2.0$\,mm & 4.42 & 4.36 & 7.44 \\
    \hline
    \multirow{3}{*}{Attenuation validity}
      & lenient  & 4.39 & 4.36 & 4.88 \\
      & default  & 4.39 & 4.36 & 7.44 \\
      & strict   & 4.39 & 4.36 & 28.88 \\
    \hline
    \multirow{3}{*}{Sphere discretisation}
      & 800 directions  & 4.44 & 4.40 & 7.44 \\
      & 1200 directions & 4.33 & 4.32 & 7.44 \\
      & 2000 directions & 4.58 & 4.59 & 7.44 \\
    \hline
  \end{tabular}}
\end{table}

\paragraph{Target resolution.}
When the prescribed feature size is reduced, the completeness requirement becomes stricter and the soft coverage objective reflects a tighter angular tolerance, so the achievable ESR tends to worsen. The degree to which this propagates differs across methods. Soft MILP shows the clearest monotone response, with mean ESR decreasing from $4.52$\,mm at $f_\mathrm{min}=0.5$\,mm to $4.36$\,mm at $f_\mathrm{min}=2.0$\,mm, consistent with the relaxed tolerance admitting a wider range of contributing views. Soft Greedy shows a flatter response ($4.51$, $4.51$, $4.42$\,mm), reflecting that the marginal-gain selection is dominated by geometric spread and is relatively insensitive to the exact angular tolerance at this budget. Binary MILP selects identical projections across all three $f_\mathrm{min}$ values ($7.44$\,mm throughout), as its binary hit-or-miss model reduces to maximising the count of covered plane normals and the maximally spread 20-view set is determined primarily by geometric coverage, which is insensitive to the exact angular threshold at this cardinality.

\paragraph{Attenuation validity.}
The validity study reveals a threshold effect that is specific to the binary formulation. Soft Greedy and soft MILP produce identical ESR values under all three validity settings ($4.39$ and $4.36\,\mathrm{mm}$ respectively in every condition), showing that the continuous soft objective is insensitive to the tested validity thresholds as long as a sufficient number of valid candidates remains. For binary MILP the effect is entirely different. Under lenient validity, binary MILP achieves $4.88\,\mathrm{mm}$, within $0.5\,\mathrm{mm}$ of the soft methods, showing that the binary model's intrinsic gap relative to the soft formulation is modest when the candidate pool is unrestricted. Tightening to the default threshold widens binary MILP to $7.44\,\mathrm{mm}$ and strict thresholding collapses it further to $28.88\,\mathrm{mm}$, a more-than-fivefold degradation from lenient to strict. Validity constraints are therefore the primary amplifier of binary MILP's underperformance, since the hard hit-or-miss threshold means a plane normal registers as covered only if at least one remaining valid view achieves a full angular hit, so excluding even a moderate fraction of candidate projections leaves entire families of plane normals uncovered.

\paragraph{Directional sampling density.}
The choice of sphere discretisation influences the absolute values of effective spatial resolution only moderately. For soft Greedy and soft MILP, the best mean ESR is obtained with 1200 sampled directions ($4.33$ and $4.32$\,mm). Both 800 and 2000 directions give slightly worse results, with the 2000-direction case ($4.58$ and $4.59$\,mm) performing worse than 800 directions ($4.44$ and $4.40$\,mm). This non-monotone behaviour is consistent with over-discretisation increasing the number of nearly collinear direction pairs and thereby diluting the soft score signal that the greedy marginal-gain criterion relies on. The qualitative conclusions are therefore not an artefact of any particular sphere discretisation, and the 1200-direction default is retained throughout.

\paragraph{Worst-case soft objective.}
Replacing the mean saturated coverage objective with the max-min formulation shifts the operating point of the optimizer. Worst-case soft MILP raises the minimum directional coverage floor at the cost of a slightly lower mean. On the ESR metric, the worst-case formulation produces values indistinguishable from soft Greedy at matched budgets because both methods achieve similar average directional support (see \cref{tab:k_checkpoints_reco_20260410}); the difference becomes visible in reconstruction quality under occlusion, where the worst-case formulation shows the strongest artefact suppression. \cref{fig:qualitative_reconstruction_20260410} illustrates this at the moderate occlusion stage. At $k=60$ the worst-case formulation already shows marginally fewer structural artefacts than the mean-objective variants, and by $k=100$ all three soft formulations converge to a visually similar result.

\begin{figure}[htbp]
  \centering
  \includegraphics[width=\linewidth]{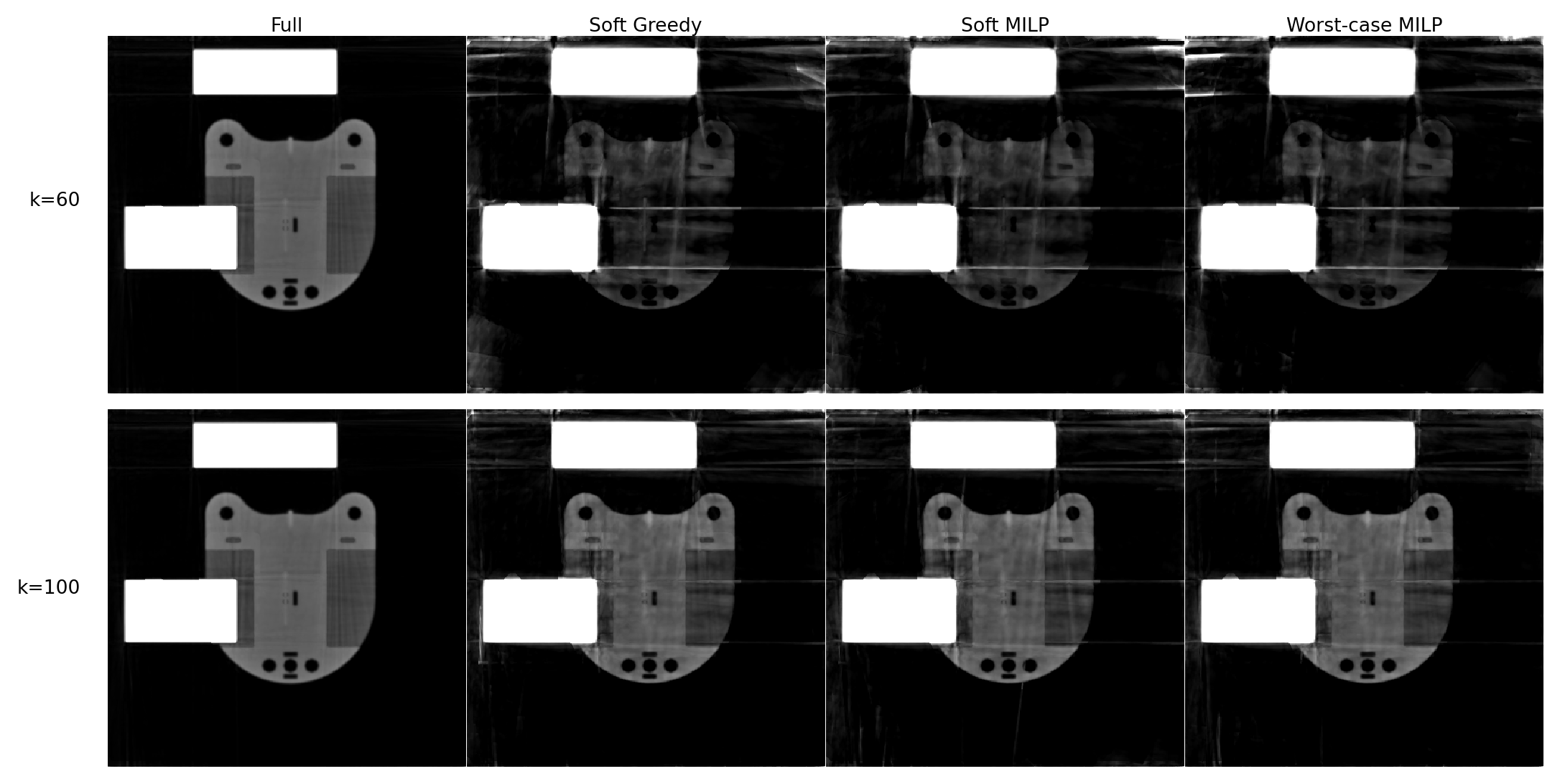}
  \caption{Qualitative comparison of the three soft formulations for a representative target region, moderate occlusion stage, at $k=60$ (top row) and $k=100$ (bottom row). The worst-case soft MILP formulation shows marginally fewer structural artefacts than the mean-objective variants at $k=60$; by $k=100$ all three converge to a visually similar result close to the full-view reference.}
  \label{fig:qualitative_reconstruction_20260410}
\end{figure}

Taken together, these ablations demonstrate that the model responds coherently to changes in resolution demand, attenuation-based admissibility, directional sampling, and objective formulation. Absolute values shift as expected, while the relative ordering between the soft methods remains stable across all conditions.

\FloatBarrier

%% file: content/discussion.tex

\section{Discussion}\label{sec:discussion}

Whether a projection is selected is a binary decision; whether a plane-normal direction is covered should not be. That design choice between continuous and binary completeness turns out to matter more than the choice between greedy and exact optimization, and its consequences are amplified nearly sixfold by the attenuation-based validity constraints that distinguish realistic acquisition from idealized settings. Across the studied range of budgets, occlusion levels, and target configurations, the soft coverage objective reflects genuine acquisition difficulty, a fast greedy procedure is already near-optimal, and effective spatial resolution derived from the same model predicts reconstruction quality consistently with the image-domain evidence.

\paragraph{The completeness framework captures genuine acquisition difficulty.}
The selection framework behaves coherently across all experimental perturbations. Varying the projection budget, the target feature size, the attenuation validity threshold, or the occlusion level produces responses in the expected direction throughout, arguing that the framework captures a real geometric property of the acquisition problem rather than a numerical artefact of the optimization model. In particular, stricter resolution demands and stronger attenuation-based validity constraints both reduce the effective directional support available to the optimizer, as expected from the sampling model.

\paragraph{Effective spatial resolution bridges geometry and image quality.}
Although effective spatial resolution is not the quantity optimized by the main formulations, it shows a coherent relation to matched reconstruction quality. The relation is clearest for MSE and PSNR within each occlusion stage, where poorer effective spatial resolution corresponds to higher error and lower PSNR. SSIM is more conditional. In the non-occluded stage, the attenuation-aware component of ESR has little opportunity to discriminate between views, and SSIM is dominated by sparse-view reconstruction effects and projection count. Under mild, moderate, and severe occlusion, however, ESR becomes strongly predictive of SSIM as well. This is an important transparency point. ESR should be interpreted as an attenuation-aware trajectory quality measure, not as a universal image-quality proxy that must explain every metric in every acquisition regime.

Mean ESR is used as the primary diagnostic throughout this work, even though the $Q_{0.95}$ directional ESR shows modestly stronger Pearson correlations under severe occlusion. The reason is that mean ESR is the natural counterpart to the optimization objective. The soft formulation maximizes saturated directional coverage, which is an average-case quantity over all sampled directions, and mean ESR summarizes the same average angular support in physically interpretable units. $Q_{0.95}$ ESR is a tail statistic that is more sensitive to isolated directional gaps and therefore more variable across instances, making it a less stable summary for comparing trajectories that differ primarily in average coverage. The stronger $Q_{0.95}$ correlation under severe occlusion reflects a specific regime in which validity-driven projection exclusion creates sparse, clustered gaps rather than uniform undersampling; in that regime $Q_{0.95}$ ESR provides additional diagnostic signal about tail risk in the trajectory, complementing the mean rather than replacing it.

\paragraph{Binary and continuous completeness are complementary, not interchangeable.}
The soft near-orthogonality score and saturated coverage objective form the primary optimization model in this work. They retain graded directional support, enable near-optimal greedy selection as the MILP certificates confirm, and consistently yield the strongest coverage scores and reconstruction quality throughout. The binary formulation counts a direction as either covered or not, which is a stronger hard-completeness guarantee but discards graded information once the threshold is met. The gap is visible when binary MILP's selected projections are evaluated on the soft coverage scale. At 100 projections, binary MILP reaches Soft Tuy $0.364 \pm 0.033$ against $0.533 \pm 0.003$ for soft MILP, a gap of $0.169$ that does not appear in the binary Tuy scores alone. In the image domain the soft methods dominate throughout: at $k=100$, soft methods achieve $35$--$37\,\mathrm{dB}$ in ROI PSNR while binary methods reach only $24\,\mathrm{dB}$, confirming that the continuous coverage advantage propagates directly to reconstruction quality. The paired comparison between binary Greedy and soft Greedy confirms that this gap is rooted in the objective formulation itself and is not an artefact of exact optimization. Attenuation-based validity constraints further amplify the binary model's intrinsic disadvantage nearly sixfold relative to an unconstrained setting, as the controlled ablation in \cref{sec:ablation} shows.

\paragraph{Greedy selection is stronger than its heuristic status implies.}
The MILP certificates make the quality of the greedy approximation precise. Across 240 paired budget comparisons in the main study, the pooled median ratio between the soft Greedy and soft MILP objective was 0.998, with 72 of 240 cases certified globally optimal. This agreement reflects a structural property of the soft saturated coverage objective. The marginal-gain selection rule, which recomputes the additional coverage contributed by every remaining view at each step, captures nearly all available gain because the objective has enough smoothness and complementarity for constructive selection to be near-optimal. The MILP therefore serves primarily as a certified quality benchmark for the soft formulation, defining the reference optimum and confirming that the gap between greedy and optimal is negligible on the studied CT instances. For the binary formulation the situation differs, as branch-and-cut can still materially improve hard completeness and effective spatial resolution over binary Greedy, so the MILP remains an active optimizer in that setting.

\paragraph{Full-view acquisition is not always the best target under occlusion.}
The occlusion-stress experiment clarifies why trajectory optimization remains meaningful even when it deliberately discards projections. In an unoccluded setting, an 800-view reconstruction is a natural consistency reference. Under moderate or severe occlusion, however, more views also means more corrupted measurements, since the full-view reconstruction integrates occlusion-induced streaks and flow artefacts that are absent from a validity-constrained selected trajectory. The validity filter is therefore not a limitation of the framework but an active design choice. By excluding projections whose attenuation exceeds the admissibility threshold, the selected trajectory avoids accumulating the class of artefacts that corrupt the full-view reference under heavy occlusion. The practical implication is that a compact set of valid projections should be understood not as a sparse-view approximation to full acquisition but as a mechanism for concentrating the projection budget on measurements that carry diagnostic signal rather than occlusion noise.

\paragraph{Limitations.}
The study also has clear limits. All results are derived from a single simulated object, so the evidence supports internal consistency and controlled method comparison more strongly than broad generalization. The reconstruction metrics are measured relative to a high-quality iterative reference rather than to an analytical ground truth, which is a practical and defensible choice but not a substitute for object-level accuracy. Furthermore, reconstruction follow-up was restricted to the study families most directly relevant to image quality, so some ablation trends are validated only through the selection metrics. Finally, the worst-case and binary MILP formulations are harder to certify at larger budgets, and the extent to which tighter optimality requirements would alter the conclusions remains an open question.

Within these bounds, the combined evidence supports completeness-driven projection selection as a principled approach for ROI-focused cone-beam CT. The soft coverage objective is near-optimally approximable by a fast greedy rule, certifiable by exact MILP bounds, and sufficiently expressive to capture the boundary between binary and continuous completeness in acquisition-relevant terms. For the soft formulation the MILP serves as a quality certificate; for the binary formulation it remains an active optimizer. Extending the framework to real scanner hardware, heterogeneous materials, and dynamic acquisition geometries is the natural next step toward clinical applicability.

%% file: content/conclusion.tex

\section{Conclusion}\label{sec:conclusion}

Tuy's completeness condition establishes whether exact cone-beam reconstruction is theoretically possible, but its binary character provides no gradient and no quantitative measure of how much information a constrained trajectory supplies for a given region of interest. This work addressed that gap by reformulating completeness as a continuous, resolution-aware, attenuation-constrained optimization problem and developing the theoretical and algorithmic infrastructure needed to solve it with formal guarantees.

The four contributions set out in the introduction are substantiated by the experimental record. The soft near-orthogonality score and saturated coverage objective replace the flat binary optimization landscape with a smooth, submodular objective amenable to marginal-gain selection. The greedy algorithm achieves a pooled median ratio of $0.998$ relative to the exact MILP optimum across 240 paired comparisons, confirming that the worst-case $(1-1/e)$ approximation guarantee is far from tight on realistic CT instances. Effective spatial resolution translates angular sampling gaps into resolvable feature sizes, providing a task-agnostic, physically interpretable link between the completeness model and matched reconstruction quality across all studied budgets and occlusion levels, and independently of the reconstruction algorithm. The NP-completeness of the ROI-CTTOP decision problem, together with the corresponding decision-problem hardness for binary and soft directional coverage, explains why exact optimization is non-trivial and motivates the MILP as a certified bound that validates the greedy solution rather than competing with it. Finally, the controlled ablation establishes that the binary formulation's practical limitation is its model rather than its solver: attenuation-based validity constraints amplify the intrinsic binary gap nearly sixfold, while the soft formulation remained unchanged under the studied validity thresholds as long as enough valid candidates were available.

For the field of cone-beam CT trajectory design, these results shift the framing of the problem. The limiting factor in trajectory quality is not computational but representational: a continuous coverage objective with validity-aware angular tolerances captures the acquisition geometry more faithfully than a binary one, and the difference becomes decisive under the attenuation conditions that motivate trajectory optimization in the first place. The MILP certificate further changes how greedy selection should be interpreted: it is not a heuristic approximation that might be improved by a better solver, but a near-optimal rule whose quality is bounded from above by the exact solution.

Several directions remain open. Validation on physical scanner hardware with heterogeneous materials is the most pressing step toward clinical and industrial deployment, as simulated attenuation models cannot capture all sources of validity variation. Extension of the reconstruction evaluation to task-specific image quality criteria and direct comparison with analytical ground truths would sharpen the connection between the geometric selection model and diagnostic utility. Tighter coupling between the projection selection objective and downstream iterative reconstruction algorithms, for instance by incorporating reconstruction-informed sensitivity weights into the coverage matrix, could further improve ROI fidelity at constrained budgets. Adaptive trajectory adjustment during acquisition, where the selection model is updated online as new projections arrive, is a natural extension of the greedy rule that would be particularly valuable for interventional applications.